\documentclass{article}
\usepackage[pdftex]{graphicx}
\usepackage{graphicx}
\usepackage{amssymb}
\usepackage{subcaption}
\usepackage{caption}
\usepackage{wrapfig}
\usepackage{amsmath}
\usepackage{amsthm} 

\usepackage[utf8]{inputenc} 
\usepackage[T1]{fontenc}    
\usepackage{hyperref}       
\usepackage{url}            
\usepackage{booktabs}       
\usepackage{amsfonts}       
\usepackage{nicefrac}       
\usepackage{microtype}      
\usepackage{xcolor}         
\usepackage{amsthm}
\newtheorem{corollary}{Corollary}
\newtheorem{proposition}{Proposition}
\usepackage[preprint]{neurips_2026}

\usepackage[utf8]{inputenc} 
\usepackage[T1]{fontenc}    
\usepackage{hyperref}       
\usepackage{url}            
\usepackage{booktabs}       
\usepackage{amsfonts}       
\usepackage{nicefrac}       
\usepackage{microtype}      
\usepackage{xcolor}         
\usepackage{multirow}
\usepackage{booktabs}
\title{MPD$^2$-Router:  Mask-aware Multi-expert Prior-regularized Dual-head Deferral Router in Glaucoma Screening and Diagnosis}

\author{
Wenxin Zhan\\
Baruch college\\
\texttt{wenxin.zhan@baruchmail.cuny.edu}
}
\begin{document}
\maketitle
\begin{abstract}
Learning-to-defer (L2D) can make glaucoma screening safer by routing difficult/uncertain cases to humans, yet standard formulations overlook expert availability, heterogeneous readers behavior, workload imbalance, asymmetric diagnostic harm, case difficulty from morphology and deployment shift. We introduce MPD$^2$-Router, a mask-aware multi-expert deferral framework that recasts ophthalmic triage as constrained human--AI routing: whether to defer and to which available expert. It couples a dual-head deferral/allocation policy with mask-aware Gumbel--sigmoid gating that strictly enforces per-sample availability, and fuses uncertainty, morphology, image-quality, and OOD signals. Training uses an asymmetric cost-sensitive objective with an augmented-Lagrangian deferral budget, a group-specific distribution prior, and a rank-majorization JS regularizer that jointly prevent expert collapse without forcing uniform allocation. Across three cross-national glaucoma cohorts (REFUGE, CHAKSU, ORIGA) with a frozen REFUGE-trained backbone, MPD$^2$-Router substantially lowers clinical cost and improves MCC over AI-only at a moderate deferral rate. It is Pareto-optimal in F1--MCC--cost, robust under cross-domain shift, and yields balanced expert utilization. 
\end{abstract}
\vspace{-2mm}
\section{Introduction}
\vspace{-2mm}
 Artificial intelligence has achieved strong performance on ophthalmic image classification, yet the dominant formulation of the problem remains clinically incomplete and unsafe. Most existing systems are optimized as one-shot predictors: given an image, output a label. Real ophthalmic practice does not operate this way, it's instead a \textit{sequential decision process} under uncertainty, where clinicians must determine whether the available evidence is sufficient, whether additional imaging or testing is required, and whether a case should be escalated to another specialist. In other words, the clinically relevant question is not merely  \emph{what is the label}, but \emph{what is the appropriate next action}. This mismatch between benchmark formulation and clinical workflow directly limits the safe integration of AI in ophthalmology. 

Our study is motivated by two clinical realities often abstracted away in
standard machine-learning benchmarks. First, human experts are fallible and heterogeneous rather than interchangeable oracles. Ophthalmologists, retina specialists, graders, and trainees differ in experience, subspecialty knowledge, decision thresholds, and tolerance for ambiguity; inter-grader variability can directly affect referral decisions.\citep{krause2018grader, teoh2023variability}. Treating human review as a single noiseless reference, as much of the L2D literature implicitly does,  discards precisely the structure that a safe routing system must model. Second, AI systems are not uniformly reliable. Confidence alone is a fragile triage signal: difficult morphology, low image quality, and out-of-distribution (OOD) inputs can significantly undermine calibration and accuracy. Together, these facts reframe glaucoma screening as a routing problem rather than a classification problem: \emph{when should the system act autonomously, when should it defer, and to whom?}

L2D formalizes part of this problem by allowing a model to pass selected cases to a human decision-maker \citep{madras2018predict, mozannar2020consistent}, with recent extensions to calibrated, team-based, and multi-expert settings \citep{verma2022calibrated, mao2023two}.  However, existing formulations are largely built around generic and naive deployment assumptions, such as always-available human, an expert softmax with limited treatment of clinical cost and capacity. Glaucoma triage requires a richer formulation that accounts for per-sample expert feasibility, asymmetric diagnostic harm, morphology- and quality-dependent case
difficulty, distribution shift, and the risk of concentrating deferred cases on scarce specialists. A useful glaucoma triage system should therefore be selective rather than indiscriminate, availability-aware rather than static, and Pareto-favorable: improving team-level utility without simply routing all hard cases to the same scarce expert.

\textbf{Our contributions.}
We formulate glaucoma diagnosis as a constrained human--AI routing problem and
make four contributions. First, we introduce a multi-signal deferral policy that
conditions jointly on expert availability, AI uncertainty, optic-disc/cup
morphology, image-quality risk, and OOD indicators. Second, we propose a
dual-head router that separates \emph{whether} to defer from \emph{to whom} to
defer, using mask-aware Gumbel--sigmoid support selection and masked expert
allocation to enforce feasibility. Third, we regularize allocation with a
group-specific empirical-Bayes-style competence prior and a rank-majorization JS
penalty, mitigating expert collapse without forcing uniform routing. Fourth, we
train under an asymmetric cost-sensitive objective with an augmented-Lagrangian
deferral budget, yielding clinically aligned and workload-aware decisions.
Together, MPD$^2$-Router provides a clinically grounded framework for
analyzing when autonomous AI is appropriate, when deferral is necessary, and
how scarce expert attention should be allocated safely under shift.
\vspace{-2mm}
\section{Preliminaries}
\label{sec:preliminaries}
\vspace{-2mm}
We study cost-sensitive ophthalmic triage with a frozen diagnostic predictor and a trainable multi-expert routing policy. Let \(\mathcal Y=\{0,1\}\), where \(y=1\) denotes glaucoma and \(y=0\) denotes non-glaucoma. Let \(\mathcal E=\{1,\ldots,M\}\) be the set of human experts. Action \(a=0\) retains the frozen AI prediction, while \(a=j\in\mathcal E\) defers the case to expert \(j\).

The routing training set is \(\mathcal D=\{(x_i,y_i,\ell_i,\hat y_{i,1:M},m_i^{\mathrm{exp}})\}_{i=1}^{N}\), where \(x_i\in\mathcal X\), \(y_i\in\{0,1\}\), \(\ell_i=(\ell_i^{(0)},\ell_i^{(1)})\in\mathbb R^2\) are logits from a pretrained frozen AI classifier, and \(p_i^{\mathrm{ai}}=\Pr_{\mathrm{AI}}(y_i=1\mid x_i)=\operatorname{softmax}(\ell_i)_1\). The expert label \(\hat y_{i,j}\in\{0,1,\mathrm{NA}\}\) is observed only when expert \(j\) is feasible for sample \(i\), encoded by \(m_{i,j}^{\mathrm{exp}}\in\{0,1\}\), with \(m_{i,j}^{\mathrm{exp}}=1\Longleftrightarrow j\) is feasible for sample \(i\). The feasible expert set and feasible action set are \(S_i=\{j\in\mathcal E:m_{i,j}^{\mathrm{exp}}=1\}\) and \(\mathcal A_i=\{0\}\cup S_i\). Action \(0\) is always feasible; if \(S_i=\varnothing\), we use the convention that \(d_i=0\) and the router deterministically selects action \(0\).

A stochastic router assigns \(\pi_i=\pi_\theta(\cdot\mid x_i,m_i^{\mathrm{exp}})\in\Delta(\mathcal A_i)\), where the masked simplex is
\[
\Delta(\mathcal A_i)=\{\pi\in\mathbb R_+^{M+1}:\sum_{a=0}^{M}\pi_a=1,\ \pi_a=0\ \forall a\notin\mathcal A_i\}.
\]
For \(S_i\neq\varnothing\), any feasible policy admits the defer--allocate factorization \(\pi_{i,0}=1-d_i\) and \(\pi_{i,j}=d_iq_{i,j}\), where \(d_i\in[0,1]\) and
\[
q_i\in\Delta(S_i):=\{q\in\mathbb R_+^M:\sum_{j\in S_i}q_j=1,\ q_j=0\ \forall j\notin S_i\}.
\]
Here \(d_i\) is the soft deferral mass and \(q_i\) is the conditional allocation distribution over feasible experts. Conversely, when \(d_i>0\), \(q_{i,j}=\pi_{i,j}/\sum_{k=1}^{M}\pi_{i,k}\) and \(d_i=1-\pi_{i,0}=\sum_{j=1}^{M}\pi_{i,j}\).

Soft routing quantities are optimization-facing. The empirical soft deferral rate is \(\bar d=N^{-1}\sum_{i=1}^{N}d_i\), and the expert usage masses are \(P_j^{\mathrm{soft}}=N^{-1}\sum_{i=1}^{N}\pi_{i,j}=N^{-1}\sum_{i=1}^{N}d_iq_{i,j}\). At deployment, the stochastic policy is converted into the deterministic routing decision \(\hat a_i=\arg\max_{a\in\mathcal A_i}\pi_{i,a}\), with ties broken in favor of the AI action. This induces the hard deferral indicator \(d_i^{\mathrm{hard}}=\mathbf 1[\hat a_i\neq 0]\) and hard expert frequency \(f_j^{\mathrm{hard}}=N^{-1}\sum_{i=1}^{N}\mathbf 1[\hat a_i=j]\). Thus \(d_i\) and \(\bar d\) are differentiable training quantities, whereas \(d_i^{\mathrm{hard}}\) and \(f_j^{\mathrm{hard}}\) describe realized deployment behavior.

We use an asymmetric clinical loss. For the frozen AI branch, \(C_i^{\mathrm{ai}}=c_{\mathrm{fn}}\,y_i(1-p_i^{\mathrm{ai}})+c_{\mathrm{fp}}\,(1-y_i)p_i^{\mathrm{ai}}\), with \(c_{\mathrm{fn}}>c_{\mathrm{fp}}>0\). For feasible expert \(j\in S_i\), \(C_{i,j}^{\mathrm{exp}}=c_{\mathrm{fn}}\mathbf 1[y_i=1,\hat y_{i,j}=0]+c_{\mathrm{fp}}\mathbf 1[y_i=0,\hat y_{i,j}=1]\). Let \(\kappa_j\ge0\) be the operational cost of expert \(j\), with \(\kappa_0=0\). For \(\gamma\ge0\), the expected per-sample routing loss is
\[
\vspace{-2mm}
L_i(\pi_i)=\pi_{i,0}C_i^{\mathrm{ai}}+\sum_{j=1}^{M}\pi_{i,j}(C_{i,j}^{\mathrm{exp}}+\gamma\kappa_j).
\vspace{-2mm}
\]
Under the defer--allocate factorization,
\[
L_i(d_i,q_i)=(1-d_i)C_i^{\mathrm{ai}}+d_i\sum_{j=1}^{M}q_{i,j}(C_{i,j}^{\mathrm{exp}}+\gamma\kappa_j)=C_i^{\mathrm{ai}}+d_i\Delta_i(q_i),
\]
where \(\Delta_i(q_i)=\sum_{j=1}^{M}q_{i,j}(C_{i,j}^{\mathrm{exp}}+\gamma\kappa_j)-C_i^{\mathrm{ai}}\) is the marginal cost of replacing the AI decision by a deferred expert decision. The empirical clinical--operational risk is \(\mathcal L_{\mathrm{cost}}(\theta)=N^{-1}\sum_{i=1}^{N}L_i(d_i,q_i)\). 
The learning objective is to minimize this cost-sensitive routing risk under sample-dependent feasibility masks and a soft deferral budget \(\bar d\le\rho_{\mathrm{def}}\). The next section specifies the mask-aware policy class and the regularizers used to stabilize multi-expert allocation under heterogeneous availability.
\vspace{-2mm}
\section{Method}
\label{sec:method}
\vspace{-2mm}
We propose \textbf{MPD\(^2\)-Router}, a mask-aware, prior-regularized dual-head router with policy
\[
\pi_\theta(x_i,m_i^{\mathrm{exp}})
=
\bigl(1-d_i,\;d_iq_i\bigr),
\]
where \(d_i\in(0,1)\) denotes the probability of deferring case \(i\) to a
human expert, and \(q_i\in\Delta^{M-1}\) is a mask-aware conditional allocation
distribution over feasible experts. Figure~\ref{fig:mpd2_workflow_app} in Appendix~\ref{app:workflow} provides the complete workflow schematic of MPD\(^2\)-Router.
\vspace{-2mm}
\subsection{Router representation}
\vspace{-2mm}
\label{subsec:router_representation}
The router state combines three sources of evidence: frozen AI logits, risk
signals, and morphology-derived structural features. Let
\[
P_i=\operatorname{softmax}(\ell_i),
\qquad
p_i^{\mathrm{ai}}=P_{i,1},
\qquad
u_i=1-\max_{c\in\{0,1\}}P_{i,c}.
\]
The risk vector is
\[
r_i^{\mathrm{risk}}
=
\left[
\mathrm{ViM}_i,\;
r_i^{\mathrm{qual}},\;
u_i
\right]
\in\mathbb R^3,
\]
where \(\mathrm{ViM}_i\) is the standardized Virtual-logit Matching score
\citep{wang2022vim}, \(r_i^{\mathrm{qual}}\) is an image-quality risk score,
and \(u_i\) is maximum-probability uncertainty.

Let $\xi_i^{\mathrm{str}} =
\begin{bmatrix}
\mathrm{vCDR}_i\\
\mathrm{aCDR}_i
\end{bmatrix}
$
denote optic-disc/cup morphology. A scalar structural risk estimate is
\[
p_i^{\mathrm{str}}
=
\sigma
\left(
w_{\mathrm{str}}^\top \xi_i^{\mathrm{str}}
+
b_{\mathrm{str}}
\right),
\]
and the structural feature vector is
\[
r_i^{\mathrm{str}}
=
\left[
2p_i^{\mathrm{str}}-1,\;
\left|p_i^{\mathrm{ai}}-p_i^{\mathrm{str}}\right|
\right]
\in\mathbb R^2 .
\]
The first coordinate is a centered morphology margin; the second measures
AI--structure disagreement.

The branch encoders are
\[
h_i^{\mathrm{risk}}
=
\phi_{\mathrm{risk}}(r_i^{\mathrm{risk}}),
\qquad
h_i^{\mathrm{str}}
=
\phi_{\mathrm{str}}(r_i^{\mathrm{str}}),
\qquad
h_i^{\mathrm{ai}}
=
\phi_{\mathrm{ai}}(\ell_i),
\]
and the fused router representation is $
h_i
=
\phi_{\mathrm{fuse}}
\left(
h_i^{\mathrm{risk}}
\;\|\;
h_i^{\mathrm{str}}
\;\|\;
h_i^{\mathrm{ai}}
\right).$
\vspace{-1mm}
\subsection{Mask-aware dual-head policy}
\label{subsec:mask_aware_policy}
\vspace{-1mm}
The policy is factorized into two coupled decisions:
\[
\text{(i) whether to defer},\qquad
\text{(ii) how to allocate deferred mass across feasible experts}.
\]
The defer head maps \(h_i\) to a soft deferral mass
\[
d_i
=
\mathbf 1[|S_i|>0]\,
\sigma
\left(
f_{\mathrm{def}}(h_i)
\right)
\in[0,1].
\]
The factor \(\mathbf 1[|S_i|>0]\) enforces autonomous prediction when no expert
is feasible.

The expert head is first parameterized through a shared expert trunk $z_i=\varphi(h_i)$, it then produces two sets of logits:
\[
\gamma_i=W_gz_i+c_g\in\mathbb R^M,
\qquad
\beta_i=W_az_i+c_a\in\mathbb R^M ,
\]
where \(\gamma_i\) controls stochastic support selection and \(\beta_i\) controls allocation within the selected support.

\paragraph{Stage I: masked stochastic support selection.}

Feasibility is imposed before randomization by the extended-real masking map
\[
  \bar\gamma_{i,j} \;=\; \gamma_{i,j} + \log m_{i,j}^{\mathrm{exp}}
  \;=\;
  \begin{cases}
    \gamma_{i,j}, & j\in S_i,\\
    -\infty,      & j\notin S_i,
  \end{cases}
\]
Equivalently, the masked Bernoulli odds are
$
\zeta_{i,j}
=
\exp(\bar\gamma_{i,j})
=
m_{i,j}^{\mathrm{exp}}\exp(\gamma_{i,j}).
$
Using the Gumbel-max representation, let
\[
G_{i,j}^{(1)},G_{i,j}^{(0)}
\stackrel{\mathrm{i.i.d.}}{\sim}
\mathrm{Gumbel}(0,1),
\qquad
\eta_{i,j}=G_{i,j}^{(1)}-G_{i,j}^{(0)}
\sim\mathrm{Logistic}(0,1).
\]
Equivalently, in distribution,
\[
\eta_{i,j}
\overset{d}{=}
\log U_{i,j}-\log(1-U_{i,j}),
\qquad
U_{i,j}\sim\mathrm{Unif}(0,1).
\]
The hard Bernoulli gate can be written as
\begin{equation}
s_{i,j}^{\mathrm{hard}}
=
\mathbf 1
\!\left[
\bar\gamma_{i,j}+\eta_{i,j}\ge 0
\right].
\label{eq:hard_gate_clean}
\end{equation}

Replacing the discontinuous Bernoulli gate by a Binary Concrete
(Gumbel--sigmoid) relaxation gives
\begin{align}
\tilde s_{i,j}
&=
\sigma\!\left(
\frac{\bar\gamma_{i,j}+\eta_{i,j}}{\tau_g}
\right)
\nonumber\\
&=
\left(
1+
\exp\!\left[
-\frac{
\log \zeta_{i,j}
+
\log U_{i,j}
-
\log(1-U_{i,j})
}{\tau_g}
\right]
\right)^{-1}
\nonumber\\
&=
\frac{
\zeta_{i,j}^{1/\tau_g}U_{i,j}^{1/\tau_g}
}{
\zeta_{i,j}^{1/\tau_g}U_{i,j}^{1/\tau_g}
+
(1-U_{i,j})^{1/\tau_g}
},
\qquad
\tau_g>0 .
\label{eq:gumbel_sigmoid_full_clean}
\end{align}

Thus masking is exact at both the hard and relaxed levels:
\[
m_{i,j}^{\mathrm{exp}}=0
\quad\Longrightarrow\quad
\zeta_{i,j}=0
\quad\Longrightarrow\quad
s_{i,j}^{\mathrm{hard}}=\tilde s_{i,j}=0
\qquad
\forall \tau_g>0 .
\]

We use the straight-through estimator
\[
s_{i,j}^{\mathrm{ST}}
=
s_{i,j}^{\mathrm{hard}}
-
\operatorname{sg}(\tilde s_{i,j})
+
\tilde s_{i,j},
\]
where \(\operatorname{sg}(\cdot)\) denotes stop-gradient. The forward pass uses
the hard selector \(s_{i,j}^{\mathrm{hard}}\), while the backward pass follows
the pathwise Binary Concrete gradient. For every feasible expert
\(j\in S_i\), conditioning on the sampled noise and writing
\[
a_{i,j}^{g}
=
\frac{\bar\gamma_{i,j}+\eta_{i,j}}{\tau_g},
\qquad
\tilde s_{i,j}=\sigma(a_{i,j}^{g}),
\]
the map \(\bar\gamma_{i,j}\mapsto \tilde s_{i,j}\) is \(C^\infty\), and
\[
\frac{\partial \tilde s_{i,j}}{\partial \bar\gamma_{i,j}}
=
\frac{1}{\tau_g}
\tilde s_{i,j}(1-\tilde s_{i,j}).
\]
Consequently,
\[
\nabla_{\theta}s_{i,j}^{\mathrm{ST}}
=
\nabla_{\theta}\tilde s_{i,j}
=
\frac{1}{\tau_g}
\tilde s_{i,j}(1-\tilde s_{i,j})
\nabla_{\theta}\gamma_{i,j},
\qquad j\in S_i,
\]
and the gradient is identically zero for infeasible experts.

\paragraph{Stage II: masked expert allocation.}
Conditional on the feasible set, allocation logits are temperature-scaled and
masked before normalization:
\[
\bar\beta_{i,j}
=
\frac{\beta_{i,j}}{\tau_a}
+
\log m_{i,j}^{\mathrm{exp}}
=
\begin{cases}
\beta_{i,j}/\tau_a, & j\in S_i,\\
-\infty, & j\notin S_i,
\end{cases}
\qquad
\tau_a>0.
\]
For \(k_i>0\), the masked softmax allocation is
\[
a_{i,j}
=
\frac{
m_{i,j}^{\mathrm{exp}}\exp(\beta_{i,j}/\tau_a)
}{
\sum_{k=1}^{M}
m_{i,k}^{\mathrm{exp}}\exp(\beta_{i,k}/\tau_a)
},
\qquad
a_i\in\Delta(S_i).
\]
Thus allocation mass is normalized only over experts that are actually feasible
for case \(i\).

\paragraph{Full policy.}
Since independent Bernoulli support selection can produce an empty selected set,
we repair the realized forward support by falling back to the full feasible mask:
\[
s_i^{\sharp}
=
\begin{cases}
s_i^{\mathrm{hard}}, & \mathbf 1^\top s_i^{\mathrm{hard}}>0,\\
m_i^{\mathrm{exp}}, & \mathbf 1^\top s_i^{\mathrm{hard}}=0.
\end{cases}
\]
For \(k_i>0\), the conditional expert policy is the renormalized restriction of
the masked allocation \(a_i\) to the repaired support:
\begin{equation}
q_{i,j}
=
\frac{
a_{i,j}s_{i,j}^{\sharp}
}{
\sum_{k=1}^{M}a_{i,k}s_{i,k}^{\sharp}
},
\qquad
j=1,\ldots,M .
\label{eq:conditional_expert_allocation}
\end{equation}
Combining the soft deferral mass with the conditional expert distribution gives
the full action policy
\begin{equation}
\label{eq:full_policy}
\pi_i
=
\bigl(1-d_i,\;d_iq_i\bigr)
\in\Delta(\mathcal A_i),
\qquad
q_i\in\Delta(S_i),
\qquad
\sum_{j=0}^{M}\pi_{i,j}=1.
\end{equation}%
\vspace{-4mm}
\subsection{Group-specific distribution prior}
\label{subsec:gsdp}
GSDP regularizes deferred expert allocation at the group level. Each sample is assigned a hybrid group
\[
g(i)=(f(i),c(i))\in\{1,\ldots,G\},
\]
where \(f(i)\) is an availability-support family induced by \(m_i^{\mathrm{exp}}\), and \(c(i)\) is a within-family representation cluster. Define
\[
\vspace{-2mm}
\mathcal I_g=\{i:g(i)=g\},\qquad D_g=\sum_{i\in\mathcal I_g}d_i,\qquad D_+=\sum_{i=1}^N d_i,\qquad 
\tilde q_{g,j}=D_g^{-1}\sum_{i\in\mathcal I_g}d_iq_{i,j}\quad(D_g>0).
\]
Thus \(\tilde q_g\) is the deferred-load-weighted allocation barycenter for group \(g\).

\paragraph{Hierarchical empirical-Bayes-style prior.} We construct \(\tilde p_g\) through a hierarchical empirical-Bayes-style shrinkage estimator that pools expert-reliability statistics from the global cohort to support families and then to fine-grained groups. For \(S\subseteq\mathcal T\), let
\[
\mathcal A_S=\{j:\exists i\in S\ \mathrm{s.t.}\ m_{i,j}^{\mathrm{exp}}=1\},\qquad
b_{S,j}=c_{\mathrm{fn}}\widehat{\mathrm{FNR}}_{S,j}^{\mathrm{Lap}}+c_{\mathrm{fp}}\widehat{\mathrm{FPR}}_{S,j}^{\mathrm{Lap}}+\kappa_j .
\]
For hierarchy level \(\ell\in\{\mathrm{glob},\mathrm{fam},\mathrm{grp}\}\), with eligible support \(\mathcal V_S^{(\ell)}\subseteq\mathcal A_S\), define
\[
\nu_{S,j}^{(\ell)}=\frac{\mathbf 1[j\in\mathcal V_S^{(\ell)}]v_j\exp(-\tau_{\mathrm{bad}}b_{S,j})}{\sum_{k\in\mathcal V_S^{(\ell)}}v_k\exp(-\tau_{\mathrm{bad}}b_{S,k})},\qquad
\hat\nu_S^{(\ell)}=(1-u_\ell)\nu_S^{(\ell)}+u_\ell\,\mathrm{Unif}(\mathcal V_S^{(\ell)}).
\]
Here \(v_j>0\) is an optional capacity multiplier, \(\tau_{\mathrm{bad}}>0\) controls concentration toward low-badness experts, and \(u_\ell\in[0,1]\) gives a uniform floor.

Let \(\mathcal T_f=\{i:F_i=f\}\), \(\mathcal T_g=\{i:G_i=g\}\), \(n_f=|\mathcal T_f|\), \(n_g=|\mathcal T_g|\), and \(f(g)\) be the parent family. With \(\propto_{\mathcal A}\) denoting restriction to \(\mathcal A\) and renormalization, define
\[
\small
p^{\mathrm{glob}}=\hat\nu_{\mathcal T}^{(\mathrm{glob})}\;,\;
p_f\propto_{\mathcal A_{\mathcal T_f}}a_f\hat\nu_{\mathcal T_f}^{(\mathrm{fam})}+(1-a_f)p^{\mathrm{glob}}\;,\;
p_g^{(0)}\propto_{\mathcal A_{\mathcal T_g}}a_g\hat\nu_{\mathcal T_g}^{(\mathrm{grp})}+a_g^{\mathrm{fam}}p_{f(g)}+\rho_{\mathrm{glob}}p^{\mathrm{glob}} .
\]
The shrinkage weights are
\[
a_f=\frac{n_f}{n_f+n_{\mathrm{fam}}^{(0)}},\qquad
a_g=\frac{n_g}{n_g+n_{\mathrm{grp}}^{(0)}},\qquad
a_g^{\mathrm{fam}}=[1-a_g-\rho_{\mathrm{glob}}]_+ .
\]
Here \(n_{\mathrm{fam}}^{(0)}\) and \(n_{\mathrm{grp}}^{(0)}\) are pseudo-counts, while \(\rho_{\mathrm{glob}}\) preserves global bleed-through for small or noisy groups. Finally,
\[
\tilde p_g=p_g^{(0)}\in\Delta(\mathcal A_{\mathcal T_g}),\qquad \tilde p_{g,j}=0\ \ \forall j\notin\mathcal A_{\mathcal T_g}.
\]
\paragraph{GSDP regularizer.} For \(\mathcal G_+=\{g:D_g>0\}\) and \(\omega_g=D_g/D_+\), define, when \(D_+>0\),
\[
\mathcal L_{\mathrm{GSDP}}(\theta)=\sum_{g\in\mathcal G_+}\omega_gD_{\mathrm{KL}}(\tilde q_g\|\tilde p_g)
=\sum_{g\in\mathcal G_+}\frac{D_g}{D_+}\sum_{j\in\mathcal A_{\mathcal T_g}}\tilde q_{g,j}\log\frac{\tilde q_{g,j}}{\tilde p_{g,j}},
\]
and set \(\mathcal L_{\mathrm{GSDP}}=0\) if \(D_+=0\). Thus GSDP acts only on deferred mass, weights groups by effective deferred load, and shrinks each group allocation toward a reliability-weighted hierarchical prior.
\subsection{Rank-majorization Jensen--Shannon regularizer}
\label{subsec:rank_js}
GSDP controls group-level allocation, but does not directly prevent sample-level concentration. We therefore regularize the sorted rank profile of \(q_i\), independent of expert identity. For \(k_i=|S_i|>0\), define
\vspace{-2mm}
\begin{equation}
r_i = \operatorname{sort}_{\downarrow}(q_i|_{S_i}) = (r_{i,1},\ldots,r_{i,k_i}), \qquad
r_{i,1} \ge \cdots \ge r_{i,k_i} \ge 0, \qquad
\sum_{t=1}^{k_i} r_{i,t} = 1 .
\end{equation}
For $\varrho \in (0,1)$, the truncated geometric reference is
\vspace{-2mm}
\[
g_{k_i,\varrho}(t)=\frac{(1-\varrho)\varrho^{t-1}}{1-\varrho^{k_i}},\qquad t=1,\ldots,k_i .
\]
\vspace{-2mm}
Let
\[
R_i(t)=\sum_{s=1}^t r_{i,s},\qquad G_i(t)=\sum_{s=1}^t g_{k_i,\varrho}(s),\qquad
\chi_i=\mathbf 1\!\left[\max_{1\le t\le k_i}(R_i(t)-G_i(t))>m_{\mathrm{maj}}\right].
\]
Thus the penalty activates only when \(q_i\) places excessive cumulative mass on the largest ranks. With
\[
b_i^\star=\frac12(r_i+g_{k_i,\varrho}),\qquad b_{i,t}^\star=\frac12(r_{i,t}+g_{k_i,\varrho}(t)),
\]
the rank-profile Jensen--Shannon divergence is
\[
\operatorname{JS}(r_i\|g_{k_i,\varrho})=
\frac12\sum_{t=1}^{k_i}\left[
r_{i,t}\log\frac{r_{i,t}}{b_{i,t}^\star}
+g_{k_i,\varrho}(t)\log\frac{g_{k_i,\varrho}(t)}{b_{i,t}^\star}
\right].
\]
The deferred-mass-weighted rank regularizer is, for \(D_+>0\),
\[
\mathcal L_{\mathrm{rank}}(\theta)=\frac{1}{D_+}\sum_{i=1}^N d_i\chi_i\operatorname{JS}(r_i\|g_{k_i,\varrho}),
\]
and \(\mathcal L_{\mathrm{rank}}=0\) if \(D_+=0\). Hence the penalty acts on the conditional expert allocation only in proportion to the sample's soft deferral mass.
\subsection{Soft deferral control and full objective}
\label{subsec:full_objective}
\vspace{-2mm}
Let \(\bar d=N^{-1}\sum_{i=1}^N d_i\). For fixed \(q_i\), increasing the
soft deferral mass \(d_i\) is cost-improving exactly when the expert-side
cost advantage over the AI action is negative.To prevent cost minimization from over-concentrating deferrals under sparse labels and heterogeneous masks,
we optimize a GSDP/Rank-JS-regularized objective together with a
one-sided augmented-Lagrangian deferral controller for the soft
operational budget \(\bar d\le\rho_{\mathrm{def}}\):

\begin{equation}
\begin{aligned}
\min_{\theta}\ \mathcal J(\theta,\lambda_{\mathrm{def}})
&=
\frac1N
\sum_{i=1}^{N}
\left[
C_i^{\mathrm{ai}}
+
d_i
\left(
\sum_{j=1}^{M}
q_{i,j}
\left(
C_{i,j}^{\mathrm{exp}}
+
\gamma\kappa_j
\right)
-
C_i^{\mathrm{ai}}
\right)
\right] \\
&\quad
+
\lambda_{\mathrm{GSDP}}\mathcal L_{\mathrm{GSDP}}(\theta)
+
\lambda_{\mathrm{rank}}\mathcal L_{\mathrm{rank}}(\theta)
+
\lambda_{\mathrm{def}}\left(\bar d-\rho_{\mathrm{def}}\right)
+
\frac{\mu}{2}
\left[\bar d-\rho_{\mathrm{def}}\right]_+^2 ,
\end{aligned}
\label{eq:full_augmented_objective}
\end{equation}
where \(\lambda_{\mathrm{GSDP}},\lambda_{\mathrm{rank}}\ge0\),
\(\lambda_{\mathrm{def}}\ge0\) is the augmented-Lagrangian multiplier, and
\(\mu>0\) controls the quadratic penalty for budget violation. After each
epoch, the deferral multiplier is updated by projected ascent,
\[
\lambda_{\mathrm{def}}
\leftarrow
\left[
\lambda_{\mathrm{def}}
+
\eta_{\lambda}
\left(\bar d-\rho_{\mathrm{def}}\right)
\right]_+ .
\]
\vspace{-10mm}
\section{Experiments}
\label{sec:experiments}
\vspace{-2mm}
\textbf{Datasets and benchmark construction.}
We evaluate on three glaucoma cohorts: REFUGE~\citep{fang2022refuge2},
CHAKSU~\citep{kumar2023chakṣu}, and ORIGA~\citep{zhang2010origa}. The base AI
classifier is a SwinV2 model~\citep{liu2022swin} trained only on the REFUGE
training split and kept frozen during router training and evaluation, allowing us to study deferral under cross-cohort deployment shift. The cohorts provide heterogeneous supervision: CHAKSU includes native decisions from five
ophthalmologists, REFUGE contains expert optic-disc and optic-cup annotations,
and ORIGA has no native expert decisions. We treat the CHAKSU and REFUGE graders as disjoint
experts, yielding a unified expert set of size \(M=12\). Each sample is assigned an availability mask \(m_i^{\mathrm{exp}}\), where
\(m_{i,j}^{\mathrm{exp}}=1\) iff expert \(j\) has a decision for sample \(i\). Pseudo-labeling and dataset-specific supervision details are provided in Appendix~\ref{app:pseudo_labeling}.
\vspace{-2mm}
\subsection{Experiment Results}
\vspace{-2mm}
\textbf{Overall comparison.} Table~\ref{tab:overall} shows that MPD$^2$-Router achieves the most favorable system-level trade-off among the compared selective-prediction and learning-to-defer methods. Relative to AI-only inference, it improves accuracy from \(0.755\) to \(0.960\), F1 from \(0.530\) to \(0.868\), and MCC from \(0.482\) to \(0.844\), while reducing clinical cost from \(0.372\) to \(0.068\). This improvement is clinically meaningful because the router reduces high-cost autonomous AI errors rather than merely increasing average predictive performance. Importantly, the gain is not driven by indiscriminate review: MPD$^2$-Router defers \(43.7\%\) of cases, compared with \(74.7\%\) for Verma-OvA and \(62.6\%\) for Hemmer-MoE, and its spatial deferral pattern concentrates on regions where the frozen AI is unreliable rather than spreading uniformly across the test distribution (Appendix Fig.~\ref{fig:spatial_performance_map}). In the F1--cost and MCC--cost planes, MPD$^2$-Router is Pareto-favorable: no competing method simultaneously attains higher predictive utility and lower total cost (Appendix~\ref{app:pareto_tradeoffs}).
\begin{table*}[t]
\caption{
Overall and dataset-wise test performance. 
All methods are evaluated on the same overall test set ($N=899$); 
dataset sizes are REFUGE ($N=400$), CHAKSU ($N=336$), and ORIGA ($N=163$).
MPD$^2$-Router achieves the strongest overall performance and remains robust across shifted cohorts.
}
\label{tab:overall_and_dataset_breakdown}
\centering
\scriptsize
\setlength{\tabcolsep}{3.0pt}
\begin{subtable}{\textwidth}
\caption{Overall test performance.}
\label{tab:overall}
\centering
\vspace{-0.5em}
\resizebox{\textwidth}{!}{%
\begin{tabular}{lrrrrrrrrrr}
\toprule
Method & Acc & Prec & F1 & Sens & Spec & MCC & Defer & ClinicalCost & ExpertCost & TotalCost \\
\midrule
AI-Only 
& 0.755 & 0.370 & 0.530 & \textbf{0.932} & 0.725 & 0.482 & 0.000 & 0.372 & 0.000 & 0.372 \\
Mao-TwoStage 
& 0.925 & 0.726 & 0.760 & 0.797 & 0.948 & 0.717 & 0.403 & 0.127 & 0.084 & 0.211 \\
Narasimhan-PH 
& 0.937 & 0.764 & 0.794 & 0.827 & 0.956 & 0.758 & 0.388 & 0.108 & 0.079 & 0.187 \\
Verma-OvA
& 0.949 & 0.813 & 0.831 & 0.850 & 0.966 & 0.801 & 0.747 & 0.088 & 0.200 & 0.288 \\
Hemmer-MoE 
& 0.949 & 0.804 & 0.833 & 0.865 & 0.963 & 0.804 & 0.626 & 0.087 & 0.185 & 0.271 \\
Keswani-Joint Committee 
& 0.932 & 0.786 & 0.764 & 0.744 & 0.965 & 0.725 & 0.468 & 0.121 & 0.143 & 0.264 \\
\textbf{MPD$^2$-Router} 
& \textbf{0.960} & \textbf{0.849} & \textbf{0.868} & 0.887 & \textbf{0.973} & \textbf{0.844} & 0.437 & \textbf{0.068} & 0.113 & \textbf{0.181} \\
\bottomrule
\end{tabular}}
\end{subtable}

\vspace{0.5em}

\begin{subtable}{\textwidth}
\caption{Dataset-wise test performance.}
\label{tab:per_dataset}
\centering
\vspace{-0.2em}
\resizebox{\textwidth}{!}{%
\begin{tabular}{llrrrrrrrrrr}
\toprule
Method & Dataset & Acc & Prec & F1 & Sens & Spec & MCC & Defer & ClinicalCost & ExpertCost & TotalCost \\
\midrule
\multirow{3}{*}{AI-Only}
& REFUGE & 0.882 & 0.458 & 0.618 & 0.950 & 0.875 & 0.610 & 0.000 & 0.179 & 0.000 & 0.179 \\
& CHAKSU & 0.732 & 0.353 & 0.511 & 0.922 & 0.698 & 0.455 & 0.000 & 0.408 & 0.000 & 0.408 \\
& ORIGA  & 0.491 & 0.328 & 0.484 & 0.929 & 0.339 & 0.263 & 0.000 & 0.773 & 0.000 & 0.773 \\

\midrule
\multirow{3}{*}{Mao-TwoStage}
& REFUGE & 0.938 & 0.642 & 0.731 & 0.850 & 0.947 & 0.705 & 0.220 & 0.101 & 0.053 & 0.154 \\
& CHAKSU & 0.940 & 0.816 & 0.800 & 0.784 & 0.968 & 0.765 & 0.470 & 0.106 & 0.081 & 0.186 \\
& ORIGA  & 0.865 & 0.727 & 0.744 & 0.762 & 0.901 & 0.653 & 0.712 & 0.233 & 0.168 & 0.401 \\

\midrule
\multirow{3}{*}{Narasimhan-PH}
& REFUGE & 0.950 & 0.692 & 0.783 & 0.900 & 0.956 & 0.763 & 0.180 & 0.080 & 0.041 & 0.121 \\
& CHAKSU & 0.949 & 0.886 & 0.821 & 0.765 & 0.982 & 0.795 & 0.470 & 0.094 & 0.081 & \textbf{0.175} \\
& ORIGA  & 0.877 & 0.729 & 0.778 & 0.833 & 0.893 & 0.697 & 0.730 & 0.206 & 0.168 & 0.374 \\

\midrule
\multirow{3}{*}{Verma-OvA}
& REFUGE & 0.960 & 0.773 & 0.810 & 0.850 & 0.972 & 0.788 & 0.598 & 0.068 & 0.175 & 0.243 \\
& CHAKSU & 0.940 & 0.782 & 0.811 & 0.843 & 0.958 & 0.777 & 0.866 & 0.101 & 0.206 & 0.307 \\
& ORIGA  & 0.939 & 0.900 & 0.878 & 0.857 & 0.967 & 0.838 & 0.871 & 0.110 & 0.249 & 0.360 \\

\midrule
\multirow{3}{*}{Hemmer-MoE}
& REFUGE & 0.953 & 0.756 & 0.765 & 0.775 & 0.972 & 0.739 & 0.465 & 0.083 & 0.133 & 0.215 \\
& CHAKSU & 0.943 & 0.776 & 0.826 & 0.882 & 0.954 & 0.794 & 0.688 & 0.094 & 0.213 & 0.306 \\
& ORIGA  & 0.951 & 0.886 & 0.907 & 0.929 & 0.959 & 0.874 & 0.896 & 0.083 & 0.254 & 0.337 \\

\midrule
\multirow{3}{*}{Keswani-Joint Committee}
& REFUGE & 0.945 & 0.765 & 0.703 & 0.650 & 0.978 & 0.675 & 0.110 & 0.100 & 0.032 & 0.132 \\
& CHAKSU & 0.938 & 0.742 & 0.814 & 0.902 & 0.944 & 0.782 & 0.866 & 0.101 & 0.268 & 0.370 \\
& ORIGA  & 0.890 & 0.900 & 0.750 & 0.643 & 0.975 & 0.698 & 0.528 & 0.212 & 0.157 & 0.369 \\

\midrule
\multirow{3}{*}{\textbf{MPD$^2$-Router}}
& REFUGE & 0.975 & 0.857 & 0.878 & 0.900 & 0.983 & 0.864 & 0.245 & 0.043 & 0.070 & \textbf{0.113} \\
& CHAKSU & 0.958 & 0.878 & 0.860 & 0.843 & 0.979 & 0.836 & 0.491 & 0.074 & 0.114 & 0.188 \\
& ORIGA  & 0.926 & 0.812 & 0.867 & 0.929 & 0.926 & 0.820 & 0.798 & 0.120 & 0.215 & \textbf{0.335} \\
\bottomrule
\end{tabular}}
\end{subtable}
\end{table*}

\textbf{Dataset-wise robustness.}
Table~\ref{tab:per_dataset} further disaggregates performance across REFUGE, CHAKSU, and ORIGA. MPD$^2$-Router attains the lowest total cost on REFUGE and ORIGA, and remains close to the best method on CHAKSU, while maintaining strong F1 and MCC across all three cohorts. The ORIGA results are particularly informative because the frozen AI degrades sharply under distribution shift, with accuracy falling to \(0.491\) and clinical cost rising to \(0.773\). MPD$^2$-Router recovers ORIGA performance to \(0.926\) accuracy, \(0.867\) F1, and \(0.820\) MCC, while reducing clinical cost to \(0.120\) and achieving the lowest total cost. Competing methods can obtain high per-domain accuracy or F1, but often at the price of substantially higher deferral and expert cost; for example, Hemmer-MoE defers \(89.6\%\) of ORIGA cases. Thus, MPD$^2$-Router provides a more deployable operating point: it is not uniformly best on every isolated metric, but it yields consistently strong, cost-sensitive, and shift-robust routing decisions.

\textbf{Risk-stratified robustness.} Figure~\ref{fig:risk_stratified_cost} stratifies samples along two clinically meaningful axes and compares MPD\(^2\)-Router against baselines and the available human experts. Under model-reliability risk, defined by OOD and AI-uncertainty signals, the mean available-human cost remains relatively stable across quintiles, whereas AI-Only deteriorates sharply as reliability risk increases. Under structural glaucoma risk, clinical cost increases for nearly all methods, indicating that genuine clinical difficulty for both automated and human predictors. MPD\(^2\)-Router is comparatively resilient on both axes: its cost grows more slowly than most baselines and remains substantially below AI-Only in the highest-risk quintiles.
\begin{figure}[t]
    \centering
    \includegraphics[width=\linewidth]{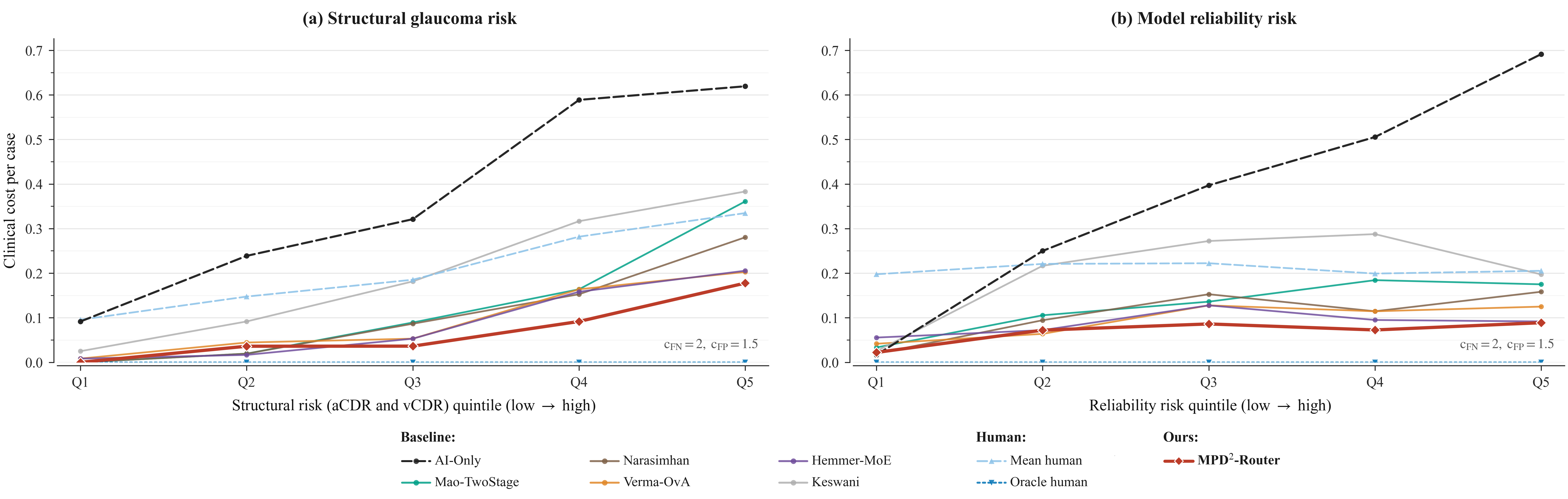}
    \caption{
    Risk-stratified clinical cost under structural glaucoma risk and
    model-reliability risk. Structural risk is defined by aCDR and vCDR
    quintiles, whereas reliability risk is defined by OOD and AI-uncertainty
    signals. MPD\(^2\)-Router exhibits slower cost growth than most baselines
    and remains markedly below AI-Only in the highest-risk strata.
    }
    \label{fig:risk_stratified_cost}
\end{figure}

\textbf{Expert collapse diagnostic.}
Table~\ref{tab:expert_collapse} examines whether human-routed cases concentrate on a small subset of experts. MPD$^2$-Router attains the lowest Top-1 share ($24.7\%$) and Top-2 shares($44.0\%$), indicating that its gains do not stem from collapse onto a dominant reader. While MPD$^2$-Router does not dominate Narasimhan-PH on every diversity metric, strict uniformity is neither clinically necessary nor desirable; the appropriate criterion is the avoidance of pathological concentration while preserving cost- and availability-aware allocation. By this measure, MPD$^2$-Router's diversity profile remains close to Narasimhan-PH and substantially improves over Hemmer-MoE and Keswani-Joint Committee, both of which exhibit severe routing concentration.
\begin{table}[t]
\centering
\caption{
Expert-collapse diagnostics over human-routed samples. Top-1 Share / Top-2 Share are the routed-case fractions assigned to the most-used and two most-used experts; lower Entropy-Collapse and Gini$_{\mathrm{norm}}$ and higher $N_{\mathrm{eff}}$ (number of effective experts) all indicate less concentrated routing.
}
\label{tab:expert_collapse}
\small
\resizebox{\linewidth}{!}{%
\begin{tabular}{lcccccc}
\toprule
Method & Routed $n$ & Top-1 Share $\downarrow$ & Top-2 Share $\downarrow$ & Entropy-Collapse $\downarrow$ & $N_{\mathrm{eff}}\uparrow$ & Gini$_{\mathrm{norm}}\downarrow$ \\
\midrule
Mao-TwoStage             & 362 & 0.340 & 0.594 & 0.300 & 4.352 & 0.692 \\
Narasimhan-PH            & 349 & 0.255 & 0.458 & \textbf{0.211} & \textbf{6.093} & \textbf{0.586} \\
Verma-OvA                & 672 & 0.339 & 0.679 & 0.378 & 3.750 & 0.759 \\
Hemmer-MoE               & 563 & 0.506 & 0.790 & 0.480 & 2.816 & 0.839 \\
Keswani-Joint Committee  & 421 & 0.876 & 0.976 & 0.822 & 1.284 & 0.973 \\
\textbf{MPD$^2$-Router}  & 393 & \textbf{0.247} & \textbf{0.440} & 0.225 & 6.027 & 0.595 \\
\bottomrule
\end{tabular}%
}
\end{table}
\vspace{-2mm}
\subsection{Ablation studies}
\vspace{-2mm}
We conducted three complementary ablation studies to isolate the contribution of the objective terms, the routing-regularization losses, and the augmented-Lagrangian (AL) deferral controller.

The objective ablation in Table~\ref{tab:ablation_objective} attains the most clinically meaningful operating point: it preserves strong predictive performance while keeping expert cost and deferral under control. Removing the tier-cost term yields marginally higher F1/MCC within one standard deviation, but this improvement is operationally expensive, increasing ExpertCost from 0.133 to 0.169 and raising soft deferral from 0.517 to 0.633. The clinical-only variant is more problematic: it nearly collapses into indiscriminate deferral (96.4\%), substantially higher ExpertCost 0.259, and worse Acc/F1/MCC. 

In contrast, ablating GSDP, Rank-JS, or both yields only negligible changes in predictive performance, indicating that these routing regularizers do not achieve routing diversity by sacrificing diagnostic accuracy. Instead they act primarily on expert-utilisation structure and improve reliability of the learned policy. Table~\ref{tab:ablation_regularization} supports this interpretation. Removing GSDP increases Top1$_{\mathrm{share}}$ from $0.371$ to $0.455$ and lowers the effective number of experts from $7.603$ to $6.754$, while removing Rank-JS produces a milder but similar concentration effect. Removing both regularizers is substantially worse: Top1$_{\mathrm{share}}$ rises to $0.493$, Top2$_{\mathrm{share}}$ to $0.714$, HHI$_{\mathrm{norm}}$ nearly doubles, and entropy drops from $0.815$ to $0.724$. Hence, GSDP and Rank-JS act complementarily: GSDP anchors group-level allocation to clinically informed priors, while Rank-JS discourages sample-level top-heavy routing, together preventing expert collapse without materially sacrificing clinical performance.

The augmented-Lagrangian in Table~\ref{tab:ablation_defer_control} demonstrates that deferral is controllable rather than an uncontrolled byproduct of the learned router. As the requested target increases from 0.25 to 0.70, the realized soft deferral rises monotonically from 0.239 to 0.517, with corresponding increases in hard deferral.The negative gaps indicate that the learned policy satisfies the request upper targest and remains conservative relative to the targets, which is clinically preferable and pragmatic to systematic over-deferral. Importantly, performance is only meaningfully degraded under very restrictive budgets such as AL25/AL30, where the router is forced to retain too many difficult cases; once the budget enters a clinically useful range, the model recovers strong performance, with AL40–AL70 giving Acc 0.942–0.947 and F1 0.813–0.825, close to the unconstrained NoAL result. This supports the claim that the AL term provides practical deferral control with limited performance compromise in the relevant operating regime.
\begin{table}[t]
\centering
\captionsetup[subtable]{skip=2pt}
\caption{Ablation studies on the test set.
(a) Objective ablation;
(b) routing-regularisation ablation;
(c) augmented-Lagrangian deferral-control.
Each entry reports mean $\pm$ standard deviation over 10 random seeds.
Lower Top1/Top2 Share, HHI, LoadCV, and DeadFrac, and higher EffExp/Entropy, indicate better expert utilisation.
In (c), Gap is corresponding defer minus the requested target.
}
\label{tab:ablation_all}
\begin{subtable}{\textwidth}
\centering
\subcaption{Objective ablation.}
\label{tab:ablation_objective}
\resizebox{\textwidth}{!}{%
\begin{tabular}{lccccccccc}
\toprule
Condition & $\mathcal{L}_{\mathrm{clin}}$ & ExpertCost & Defer$_{soft}$ & Defer$_{hard}$ & Sens. & Spec. & Acc & F1 & MCC \\
\midrule
Full objective & $0.091\pm0.009$ & $0.133\pm0.008$ & $0.517\pm0.031$ & $0.477\pm0.020$ & $0.851\pm0.023$ & $0.963\pm0.006$ & $0.947\pm0.006$ & $0.825\pm0.018$ & $0.794\pm0.021$ \\
w/o Tier loss & $0.089\pm0.009$ & $0.169\pm0.014$ & $0.633\pm0.052$ & $0.516\pm0.024$ & $0.862\pm0.022$ & $0.962\pm0.004$ & $0.947\pm0.005$ & $0.829\pm0.016$ & $0.799\pm0.019$ \\
Clinical only & $0.108\pm0.012$ & $0.259\pm0.019$ & $0.959\pm0.053$ & $0.964\pm0.109$ & $0.867\pm0.018$ & $0.946\pm0.011$ & $0.934\pm0.008$ & $0.797\pm0.019$ & $0.762\pm0.022$ \\
w/o GSDP & $0.093\pm0.013$ & $0.139\pm0.009$ & $0.536\pm0.038$ & $0.485\pm0.016$ & $0.848\pm0.017$ & $0.962\pm0.008$ & $0.945\pm0.008$ & $0.822\pm0.024$ & $0.790\pm0.028$ \\
w/o Rank-JS & $0.094\pm0.009$ & $0.134\pm0.011$ & $0.520\pm0.040$ & $0.478\pm0.019$ & $0.853\pm0.021$ & $0.961\pm0.006$ & $0.945\pm0.005$ & $0.820\pm0.016$ & $0.789\pm0.019$ \\
w/o GSDP + Rank-JS & $0.091\pm0.009$ & $0.139\pm0.008$ & $0.523\pm0.026$ & $0.486\pm0.016$ & $0.856\pm0.020$ & $0.962\pm0.006$ & $0.946\pm0.005$ & $0.825\pm0.016$ & $0.795\pm0.019$ \\
\bottomrule
\end{tabular}}
\end{subtable}
\vspace{0.0em}
\begin{subtable}{\textwidth}
\centering
\subcaption{Routing-regularisation ablation.}
\label{tab:ablation_regularization}
\resizebox{\textwidth}{!}{%
\begin{tabular}{lcccccccccc}
\toprule
Condition & $\mathcal{L}_{\mathrm{clin}}$ & Defer$_{soft}$ & Defer$_{hard}$ & Top1$_{share}$ & Top2$_{share}$ & HHI$_{norm}$ & LoadCV$_{soft}$ & DeadFrac$_{soft}$ & EffExp & Entropy \\
\midrule
Full regularisation & $0.091\pm0.009$ & $0.517\pm0.031$ & $0.477\pm0.020$ & $0.371\pm0.062$ & $0.591\pm0.083$ & $0.086\pm0.016$ & $0.971\pm0.091$ & $0.208\pm0.067$ & $7.603\pm0.521$ & $0.815\pm0.028$ \\
w/o GSDP & $0.093\pm0.013$ & $0.536\pm0.038$ & $0.485\pm0.016$ & $0.455\pm0.067$ & $0.680\pm0.085$ & $0.123\pm0.024$ & $1.159\pm0.111$ & $0.233\pm0.050$ & $6.754\pm0.497$ & $0.768\pm0.030$ \\
w/o Rank-JS & $0.094\pm0.009$ & $0.520\pm0.040$ & $0.478\pm0.019$ & $0.398\pm0.057$ & $0.619\pm0.082$ & $0.094\pm0.018$ & $1.011\pm0.094$ & $0.250\pm0.065$ & $7.371\pm0.500$ & $0.803\pm0.028$ \\
w/o GSDP + Rank-JS & $0.091\pm0.009$ & $0.523\pm0.026$ & $0.486\pm0.016$ & $0.493\pm0.068$ & $0.714\pm0.074$ & $0.164\pm0.044$ & $1.334\pm0.174$ & $0.242\pm0.069$ & $6.082\pm0.680$ & $0.724\pm0.046$ \\
\bottomrule
\end{tabular}}
\end{subtable}
\vspace{0.0em}
\begin{subtable}{\textwidth}
\centering
\subcaption{Augmented-Lagrangian deferral-control.}
\label{tab:ablation_defer_control}
\resizebox{\textwidth}{!}{%
\begin{tabular}{lccccccccc}
\toprule
Condition & Target & Defer$_{soft}$ & Defer$_{hard}$ & Gap$_{soft}$ & Gap$_{hard}$ & $\mathcal{L}_{\mathrm{total}}$ & $\mathcal{L}_{\mathrm{clin}}$ & Acc & F1 \\
\midrule
AL25 & $0.250$ & $0.239\pm0.007$ & $0.201\pm0.006$ & $-0.011\pm0.007$ & $-0.049\pm0.006$ & $0.176\pm0.007$ & $0.162\pm0.007$ & $0.897\pm0.005$ & $0.721\pm0.010$ \\
AL30 & $0.300$ & $0.281\pm0.012$ & $0.236\pm0.014$ & $-0.019\pm0.012$ & $-0.064\pm0.014$ & $0.156\pm0.011$ & $0.139\pm0.012$ & $0.913\pm0.008$ & $0.751\pm0.017$ \\
AL40 & $0.400$ & $0.371\pm0.006$ & $0.342\pm0.009$ & $-0.029\pm0.006$ & $-0.058\pm0.009$ & $0.124\pm0.009$ & $0.098\pm0.010$ & $0.942\pm0.006$ & $0.813\pm0.018$ \\
AL50 & $0.500$ & $0.439\pm0.010$ & $0.417\pm0.015$ & $-0.061\pm0.010$ & $-0.083\pm0.015$ & $0.128\pm0.010$ & $0.097\pm0.010$ & $0.943\pm0.006$ & $0.814\pm0.018$ \\
AL60 & $0.600$ & $0.491\pm0.023$ & $0.459\pm0.022$ & $-0.109\pm0.023$ & $-0.141\pm0.022$ & $0.125\pm0.011$ & $0.091\pm0.011$ & $0.947\pm0.007$ & $0.825\pm0.020$ \\
AL70 & $0.700$ & $0.517\pm0.031$ & $0.477\pm0.020$ & $-0.183\pm0.031$ & $-0.223\pm0.020$ & $0.126\pm0.010$ & $0.091\pm0.009$ & $0.947\pm0.006$ & $0.825\pm0.018$ \\
NoAL & -- & $0.535\pm0.034$ & $0.503\pm0.008$ & -- & -- & $0.127\pm0.009$ & $0.092\pm0.009$ & $0.946\pm0.006$ & $0.824\pm0.016$ \\
\bottomrule
\end{tabular}}
\end{subtable}
\end{table}
\vspace{-3mm}
\section{Conclusion}
\label{sec:conclusion}
\vspace{-2mm}
We presented MPD$^2$-Router for cost-sensitive glaucoma triage. By separating deferral from expert allocation, enforcing per-sample human availability, and regularizing expert use, MPD$^2$-Router addresses key deployment failures of standard L2D methods: indiscriminate deferral, expert collapse, asymmetric clinical harm, and distribution shift. Across three glaucoma cohorts, the proposed method improves diagnostic utility, lowers clinical and total cost, maintains controlled deferral, and yields more balanced expert utilization than strong baselines. These results support structured human--AI routing as a practical direction for safer ophthalmic AI deployment. A limitation is the scarcity of dense prospective multi-expert annotations, so our evaluation relies on retrospective cross-cohort datasets with heterogeneous and partially observed expert labels; prospective clinical validation and appropriate regulatory review remain necessary before deployment. However, this limitation reflects the real clinical setting our mask-aware formulation is designed for, and the cross-cohort results indicate that MPD$^2$-Router remains effective under heterogeneous and sparse expert supervision.
\clearpage
\bibliographystyle{plainnat}
\bibliography{citations}
\clearpage
\appendix
\section{Related Work}
\label{sec:related_work}
\paragraph{Selective prediction, algorithmic triage, and learning to defer.}
Learning to defer (L2D) is closely related to selective prediction and
classification with a reject option, where a model abstains on uncertain inputs
to improve the risk--coverage trade-off \citep{geifman2017selective, geifman2019selectivenet}. However, classical selective prediction treats
abstention as an unstructured reject action, whereas clinical deployment
requires deciding not only whether to abstain but also whether the downstream
human decision-maker is likely to improve the final outcome. This distinction
motivated early human--AI triage and deferral frameworks. \citet{madras2018predict} introduced learning to defer as a system-level alternative to pure automation,
showing that models can improve both accuracy and fairness by passing selected
cases to external decision-makers. \citet{raghu2019algorithmic} further argued
that automation is not simply a question of whether an algorithm outperforms
humans on average, but an instance-wise allocation problem that depends on both
algorithmic and human error. Subsequent work on differentiable triage formalized
this division of labor and showed that models trained for full automation may be
suboptimal when only a subset of cases will be handled by the algorithm
\citep{okati2021differentiable}.

\paragraph{Single-expert learning to defer.}
A central theoretical foundation for L2D was established by
\citet{mozannar2020consistent}, who introduced a Bayes-consistent surrogate for
single-expert deferral by casting the classify-or-defer decision as an augmented
cost-sensitive classification problem. \citet{verma2022calibrated} identified
calibration limitations in the softmax-style formulation and proposed a
one-vs-all (OvA) parameterization that yields calibrated estimates of expert
correctness while preserving consistency. \citet{cao2023defense} subsequently showed that the calibration failure is not inherent to softmax itself, but rather to symmetric surrogate design, and proposed an asymmetric softmax loss that can
recover both consistency and calibrated probability estimation. In parallel,
\citet{narasimhan2022post} developed post-hoc plug-in estimators that decouple base
classifier training from the deferral function, which addresses underfitting caused
by jointly optimizing classification and deferral under nontrivial deferral
costs.

\paragraph{Multi-expert deferral.}
Extensions from a single expert to multiple experts have proceeded along
several complementary axes. \citet{keswani2021towards} framed multi-expert
deferral as a joint classifier--allocator problem in which the system chooses
among the model and multiple human experts, with an emphasis on accuracy and
fairness under heterogeneous expert behavior. \citet{hemmer2022forming}
proposed a human--AI team formation approach that jointly learns classifier
predictions and expert allocation, showing that team performance can improve
when the model complements rather than duplicates human expertise.
\citet{verma2022calibrated} studied the statistical properties of multi-expert
L2D, deriving consistent softmax and OvA surrogates, analyzing confidence
calibration across experts, and introducing conformal expert-set selection.
\citet{mao2023two} established a two-stage multi-expert L2D framework with
\(\mathcal H\)-consistency and Bayes-consistency guarantees, where a predictor
is first trained and a deferral function is then learned to assign each input to
the most suitable expert. More recent theoretically grounded work has further
studied surrogate design, realizable consistency, and cost-sensitive deferral in
multi-expert settings \citep{mao2024principled,alves2024cost}. Lastly,
\citet{tailor2024learning} considered learning to defer to a population of
experts, using meta-learning to adapt to experts whose predictions were not
observed during training.

\paragraph{Clinical human--AI deferral.}
Clinical AI provides a particularly strong motivation for L2D because both AI
models and human experts are imperfect, and their errors may be complementary.
In medical imaging, Complementarity-Driven Deferral to Clinical Workflow
(CoDoC) showed that selective deferral between an AI model and a clinical
workflow can improve diagnostic accuracy and reduce workload in screening
settings \citep{dvijotham2023enhancing}. Ophthalmology is especially aligned
with this perspective: diagnostic decisions are affected by image quality,
disease severity, out-of-distribution acquisition conditions, and inter-grader
variability in optic-disc assessment \citep{varma1992expert,
pourjavan2024evaluating}. These properties make glaucoma screening a
natural setting for selective, cost-aware, and expert-aware routing rather than
uniform automation or indiscriminate referral.

\paragraph{Positioning of MPD\texorpdfstring{\(^2\)}{2}-Router.}
Despite substantial progress, existing L2D methods primarily emphasize
consistency, calibration, or post-hoc deferral, and most multi-expert
formulations assume a fixed expert set, dense expert annotations, or an
unregularized allocation over candidate experts. They provide limited machinery
for handling heterogeneous expert availability masks, incorporating
clinically-grounded priors over expert competence, controlling operational
deferral rates, or preventing routing mass from collapsing onto a small subset
of experts. MPD\(^2\)-Router addresses these gaps through a mask-aware,
prior-regularized dual-head architecture. Unlike single-softmax formulations,
it explicitly separates the decision of \emph{whether to defer} from the
conditional decision of \emph{whom to defer to}. The defer head estimates a
soft human-referral probability, while the expert head performs stochastic
support selection and masked allocation only over feasible experts. In addition,
our method incorporates fused clinical reliability signals, including AI logits,
uncertainty, image quality, OOD risk, and structural glaucoma biomarkers; uses
group-specific distribution priors and rank-based JS regularization to reduce
expert collapse; and enforces a soft deferral budget through an augmented
Lagrangian objective. Thus, MPD\(^2\)-Router \emph{shifts multi-expert L2D from a
purely accuracy-driven allocation problem toward a clinically aligned,
availability-aware, cost-aware, and collapse-resistant routing framework} for robust
human--AI glaucoma screening.
\section{Workflow and Dual-Head Architecture of MPD\texorpdfstring{\(^2\)}{2}-Router}
\label{app:workflow}
\begin{figure}[t]
    \centering
    \includegraphics[width=1.0\linewidth]{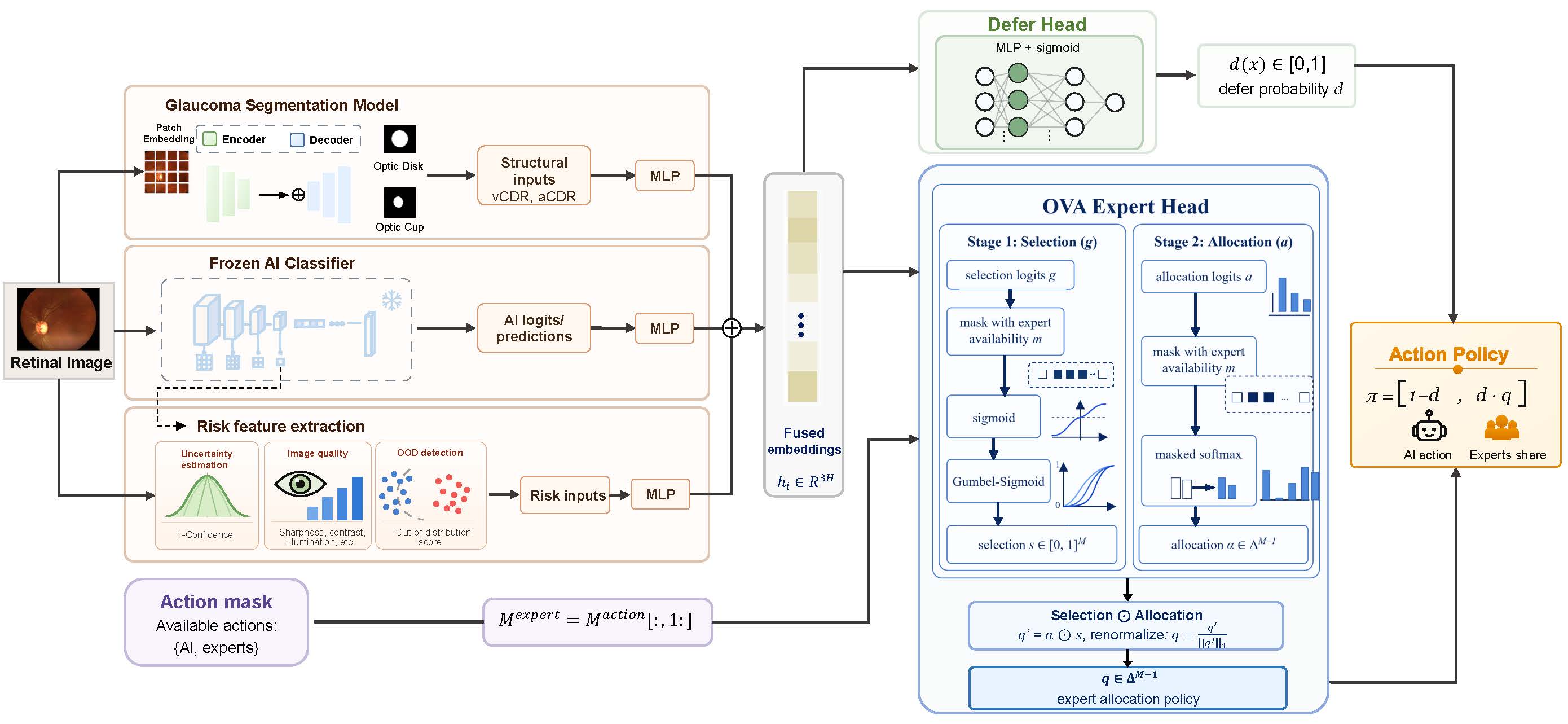}
    \caption{
    Overview of MPD\(^2\)-Router. A retinal image is processed by three
    complementary branches: a frozen AI classifier for diagnostic logits, a
    glaucoma segmentation model for structural biomarkers such as vCDR and
    aCDR, and risk-feature extractors for uncertainty, image quality, and OOD
    signals. These signals are fused into an embedding \(h_i\), which is passed
    to a dual-head router. The defer head estimates the probability \(d_i\) of
    human referral, while the mask-aware expert head first selects a feasible
    expert support and then allocates probability mass over available experts.
    The final policy \(\pi_\theta(x_i,m_i^{\mathrm{exp}})=(1-d_i,d_iq_i)\)
    therefore jointly determines whether the AI prediction is retained or the
    case is deferred to a specific human expert.
    }
    \label{fig:mpd2_workflow_app}
\end{figure}

Figure~\ref{fig:mpd2_workflow_app} illustrates the complete MPD\(^2\)-Router
workflow. Given a retinal image, the model extracts complementary diagnostic,
structural, and reliability signals from a frozen AI classifier, a glaucoma
segmentation model, and risk-feature modules. These signals are fused into a
shared representation \(h_i\), which feeds a dual-head routing architecture.
This dual-head design separates the decision of \emph{whether to defer} from
the decision of \emph{whom to defer to}: the defer head outputs a scalar
deferral probability \(d_i\), whereas the expert head produces a conditional,
mask-aware allocation distribution \(q_i\) over only the available experts.
Consequently, the final policy \(\pi_\theta=(1-d_i,d_iq_i)\) provides an
interpretable and feasibility-constrained mechanism for balancing AI retention
against expert-specific human referral.

\section{Deferred-load form of the GSDP penalty}
\label{app:gsdp_deferred_load}

We expand the group-specific distribution prior (GSDP) in its KL instantiation.
Let
\[
D_{+}:=\sum_{i=1}^{N}d_i
\]
denote the total soft deferred mass, and assume \(D_{+}>0\). For each group
\(g\in\{1,\ldots,G\}\), define the group-level deferred mass and deferred
expert load by
\[
D_g
=
\sum_{i:g(i)=g}d_i,
\qquad
Q_{g,j}
=
\sum_{i:g(i)=g}d_i q_{i,j},
\qquad
Q_g=(Q_{g,1},\ldots,Q_{g,M}).
\]
Since \(q_i\) is a normalized conditional distribution over feasible experts,
we have
\[
\sum_{j=1}^{M}Q_{g,j}
=
\sum_{i:g(i)=g}d_i
\sum_{j=1}^{M}q_{i,j}
=
D_g .
\]
For every active group \(g\) with \(D_g>0\), the normalized deferred expert-load
distribution is therefore
\[
\tilde q_{g,j}
=
\frac{Q_{g,j}}{D_g},
\qquad
\sum_{j=1}^{M}\tilde q_{g,j}=1.
\]
The GSDP penalty is
\[
\mathcal L_{\mathrm{GSDP}}(\theta)
=
\sum_{g=1}^{G}
\omega_g
D\!\left(
\tilde q_g \,\middle\|\, \tilde p_g
\right),
\qquad
\omega_g
=
\frac{D_g}{D_{+}}.
\]
Inactive groups with \(D_g=0\) have \(\omega_g=0\) and contribute no loss.

For the KL choice,
\[
D\!\left(
\tilde q_g \,\middle\|\, \tilde p_g
\right)
=
\sum_{j=1}^{M}
\tilde q_{g,j}
\log
\frac{\tilde q_{g,j}}{\tilde p_{g,j}},
\]
with the convention \(0\log 0=0\), and with the usual requirement that
\(\tilde p_{g,j}>0\) whenever \(\tilde q_{g,j}>0\). Substituting
\(\omega_g=D_g/D_{+}\) and \(\tilde q_{g,j}=Q_{g,j}/D_g\) gives
\begin{align}
\mathcal L_{\mathrm{GSDP}}^{\mathrm{KL}}(\theta)
&=
\sum_{g=1}^{G}
\frac{D_g}{D_{+}}
\sum_{j=1}^{M}
\frac{Q_{g,j}}{D_g}
\log
\frac{Q_{g,j}/D_g}{\tilde p_{g,j}}
\label{eq:app_gsdp_kl_expand_1}
\\
&=
\frac{1}{D_{+}}
\sum_{g=1}^{G}
\sum_{j=1}^{M}
Q_{g,j}
\log
\frac{Q_{g,j}}{D_g\tilde p_{g,j}} .
\label{eq:app_gsdp_kl_expand_2}
\end{align}
Thus the KL-GSDP penalty is equivalently a normalized divergence between the
actual group-wise deferred expert load \(Q_g\) and the target deferred load
\(D_g\tilde p_g\). In particular, for each group \(g\), the target is not merely
a probability vector \(\tilde p_g\), but a load vector whose total mass is scaled
by the amount of deferred mass assigned to that group.

\begin{corollary}[KL-GSDP as load matching]
\label{cor:gsdp_load_matching}
For a fixed active group \(g\) with \(D_g>0\), the KL-GSDP group term
\[
\Phi_g(Q_g)
=
\sum_{j=1}^{M}
Q_{g,j}
\log
\frac{Q_{g,j}}{D_g\tilde p_{g,j}}
\]
is minimized over the simplex-constrained load set
\[
\mathcal Q_g
=
\left\{
Q_g\in\mathbb R_{+}^{M}:
\sum_{j=1}^{M}Q_{g,j}=D_g
\right\}
\]
if and only if
\[
Q_g=D_g\tilde p_g .
\]
\end{corollary}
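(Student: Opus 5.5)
The plan is to reduce the claim to Gibbs' inequality by rescaling the load vector onto the probability simplex. Fix an active group \(g\) with \(D_g>0\). Any \(Q_g\in\mathcal Q_g\) can be written as \(Q_g=D_g\tilde q_g\) with \(\tilde q_g\in\Delta^{M-1}\), and this map is a bijection between \(\mathcal Q_g\) and the simplex. Substituting into \(\Phi_g\) and using \(\sum_j\tilde q_{g,j}=1\) gives
\[
\Phi_g(Q_g)=\sum_{j=1}^{M}D_g\tilde q_{g,j}\log\frac{D_g\tilde q_{g,j}}{D_g\tilde p_{g,j}}=D_g\,D_{\mathrm{KL}}(\tilde q_g\|\tilde p_g).
\]
This is exactly the identity \eqref{eq:app_gsdp_kl_expand_1}--\eqref{eq:app_gsdp_kl_expand_2} read group by group. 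Minimizing \(\Phi_g\) over \(\mathcal Q_g\) is therefore equivalent to minimizing \(D_{\mathrm{KL}}(\tilde q_g\|\tilde p_g)\) over the simplex, since \(D_g>0\) is a fixed positive constant.

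Next I will invoke Gibbs' inequality. I prove it via Jensen applied to the strictly concave \(\log\) on the support \(J=\{j:\tilde q_{g,j}>0\}\):
\[
-D_{\mathrm{KL}}(\tilde q_g\|\tilde p_g)=\sum_{j\in J}\tilde q_{g,j}\log\frac{\tilde p_{g,j}}{\tilde q_{g,j}}\le\log\sum_{j\in J}\tilde p_{g,j}\le\log 1=0.
\]
Hence \(\Phi_g\ge 0\), and \(\Phi_g(D_g\tilde p_g)=0\) directly. This gives the "if" direction and shows the minimum value is \(0\).

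The main obstacle is the "only if" direction, which needs careful handling of boundary and support conventions. For points with \(\tilde q_{g,j}>0\) but \(\tilde p_{g,j}=0\), I will use the paper's stated convention that such terms make the divergence \(+\infty\), so these points are not minimizers. Equality \(\Phi_g=0\) forces both inequalities above to be tight. Strict concavity makes the first tight only if \(\tilde p_{g,j}/\tilde q_{g,j}\) is constant on \(J\). The second is tight only if \(\sum_{j\in J}\tilde p_{g,j}=1\), which forces \(\tilde p_{g,j}=0\) off \(J\). Together these give \(\tilde q_g=\tilde p_g\), i.e. \(Q_g=D_g\tilde p_g\).

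The zero entries of \(\tilde p_g\) outside \(\mathcal A_{\mathcal T_g}\) are handled by restricting attention to the support. The result is that the unique minimizer is exactly \(Q_g=D_g\tilde p_g\).
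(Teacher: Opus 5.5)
Your proposal is correct and follows the paper's proof: rescale $Q_g=D_g r_g$ onto the simplex, identify $\Phi_g(Q_g)=D_g\,\mathrm{KL}(r_g\|\tilde p_g)$, and conclude from nonnegativity of KL, with equality iff $r_g=\tilde p_g$. The only difference is that you prove Gibbs' inequality and its equality case via Jensen, whereas the paper simply cites it; also, the paper imposes $\tilde p_{g,j}>0$ whenever $\tilde q_{g,j}>0$ as a requirement rather than using a $+\infty$ convention, but your argument works under either convention.
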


\begin{proof}
Since \(D_g>0\), write \(r_g=Q_g/D_g\). Then \(r_g\in\Delta^{M-1}\), and
\[
\Phi_g(Q_g)
=
D_g
\sum_{j=1}^{M}
r_{g,j}
\log
\frac{r_{g,j}}{\tilde p_{g,j}}
=
D_g
\mathrm{KL}(r_g\|\tilde p_g).
\]
By nonnegativity of KL divergence,
\[
\Phi_g(Q_g)\ge 0,
\]
with equality if and only if \(r_g=\tilde p_g\). Therefore the unique minimizer
on \(\mathcal Q_g\) is
\[
Q_g=D_g\tilde p_g .
\]
\end{proof}

\section{Expansion and activation property of the rank-majorization JS penalty}
\label{app:rank_majorization_js}

We give the full construction of the rank-majorization Jensen--Shannon penalty.
For sample \(i\), let
\[
\mathcal A_i=\{j:m_{i,j}^{\mathrm{exp}}=1\},
\qquad
k_i=|\mathcal A_i|
\]
denote the feasible expert set and its cardinality. The conditional expert
allocation satisfies
\[
q_i\in\Delta(\mathcal A_i),
\qquad
q_{i,j}=0\quad\text{for }j\notin\mathcal A_i.
\]
Let
\[
q_i^{\downarrow}
=
\bigl(q_{i,(1)},\ldots,q_{i,(k_i)}\bigr)
\]
be the vector of feasible expert probabilities sorted in nonincreasing order,
so that
\[
q_{i,(1)}\ge q_{i,(2)}\ge \cdots \ge q_{i,(k_i)},
\qquad
\sum_{\ell=1}^{k_i}q_{i,(\ell)}=1.
\]
Ties may be broken arbitrarily; the construction is permutation-invariant and is
piecewise differentiable almost everywhere.

We compare this sorted allocation profile to a truncated geometric rank profile
\[
g_i
=
(g_{i,1},\ldots,g_{i,k_i})
\in \Delta^{k_i-1},
\]
defined by
\[
g_{i,\ell}
=
\frac{\exp\{-\tau_{\mathrm{rank}}(\ell-1)\}}
{\sum_{r=1}^{k_i}\exp\{-\tau_{\mathrm{rank}}(r-1)\}},
\qquad
\ell=1,\ldots,k_i,
\]
where \(\tau_{\mathrm{rank}}\ge 0\) controls the concentration of the reference
profile. Larger \(\tau_{\mathrm{rank}}\) permits a more top-heavy reference,
whereas smaller \(\tau_{\mathrm{rank}}\) encourages a flatter allocation across
available experts.

Define the cumulative rank masses
\[
R_i(t)
=
\sum_{\ell=1}^{t}q_{i,(\ell)},
\qquad
G_i(t)
=
\sum_{\ell=1}^{t}g_{i,\ell},
\qquad
t=1,\ldots,k_i .
\]
Since both \(q_i^{\downarrow}\) and \(g_i\) are probability vectors,
\[
R_i(k_i)=G_i(k_i)=1.
\]
Thus, only strict prefix inequalities \(t<k_i\) can reveal excessive
concentration.

We define the margin-violation score
\[
v_i
=
\max_{1\le t\le k_i}
\left[
R_i(t)-G_i(t)-m
\right]_{+},
\]
where \(m\ge 0\) is a slack margin and \([x]_{+}=\max\{x,0\}\). The corresponding
activation variable is
\[
\chi_i
=
\mathbf 1[v_i>0].
\]
Equivalently,
\[
\chi_i=1
\quad\Longleftrightarrow\quad
\exists\,t\in\{1,\ldots,k_i\}
\text{ such that }
R_i(t)>G_i(t)+m .
\]
Therefore, the rank penalty is activated only when the sorted router allocation
places more mass in its leading ranks than the geometric reference allows up to
margin \(m\).

For the active samples, we penalize the Jensen--Shannon divergence between the
sorted allocation profile and the geometric reference profile:
\[
\operatorname{JS}(q_i^{\downarrow}\|g_i)
=
\frac{1}{2}
\operatorname{KL}
\left(
q_i^{\downarrow}
\,\middle\|\,
s_i
\right)
+
\frac{1}{2}
\operatorname{KL}
\left(
g_i
\,\middle\|\,
s_i
\right),
\qquad
s_i=\frac{1}{2}(q_i^{\downarrow}+g_i).
\]
Expanding the KL terms gives
\[
\operatorname{JS}(q_i^{\downarrow}\|g_i)
=
\frac{1}{2}
\sum_{\ell=1}^{k_i}
q_{i,(\ell)}
\log
\frac{2q_{i,(\ell)}}{q_{i,(\ell)}+g_{i,\ell}}
+
\frac{1}{2}
\sum_{\ell=1}^{k_i}
g_{i,\ell}
\log
\frac{2g_{i,\ell}}{q_{i,(\ell)}+g_{i,\ell}} .
\]
The rank-majorization JS penalty is then
\[
\mathcal L_{\mathrm{rankJS}}(\theta)
=
\frac{1}{\sum_{i=1}^{N}d_i}
\sum_{i=1}^{N}
d_i\,
\chi_i\,
\operatorname{JS}(q_i^{\downarrow}\|g_i).
\]
Thus, the penalty acts only on samples that are softly deferred and whose
conditional expert allocation is more top-heavy than the allowed geometric
reference profile.

\begin{proposition}[Activation criterion of the rank-majorization penalty]
\label{prop:rank_activation}
For sample \(i\), suppose that
\[
R_i(t)\le G_i(t)+m
\qquad
\text{for all }t=1,\ldots,k_i .
\]
Then \(\chi_i=0\), and sample \(i\) contributes no rank-majorization JS penalty.
Conversely, if \(\chi_i=1\), then there exists a prefix rank \(t<k_i\) such that
\[
R_i(t)>G_i(t)+m,
\]
meaning that the routed expert distribution assigns excessive cumulative mass
to its top-ranked experts relative to the geometric reference profile.
\end{proposition}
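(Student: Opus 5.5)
The plan is to argue directly from the definitions of the margin-violation score \(v_i\) and the activation indicator \(\chi_i=\mathbf 1[v_i>0]\), which are set up immediately before the proposition. No earlier result is needed beyond the normalization identity \(R_i(k_i)=G_i(k_i)=1\).

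\emph{Forward direction.} I would assume \(R_i(t)\le G_i(t)+m\) for every \(t\in\{1,\ldots,k_i\}\). Then each bracket satisfies \(R_i(t)-G_i(t)-m\le 0\), so its positive part vanishes. Taking the maximum over finitely many zeros gives \(v_i=0\), hence \(\chi_i=0\). Since sample \(i\) enters \(\mathcal L_{\mathrm{rankJS}}\) only through the product \(d_i\chi_i\operatorname{JS}(q_i^{\downarrow}\|g_i)\), its contribution is zero. This holds whatever the value of \(d_i\), and whatever the (finite, nonnegative) value of the JS term. The JS term is finite because \(s_i=\tfrac12(q_i^{\downarrow}+g_i)\) dominates both arguments. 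When \(D_+=\sum_i d_i=0\), the convention \(\mathcal L_{\mathrm{rank}}=0\) covers this case as well.

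\emph{Converse.} If \(\chi_i=1\), then \(v_i>0\). Because \(v_i\) is a maximum of finitely many terms \([R_i(t)-G_i(t)-m]_+\), some index \(t^\ast\) attains it. That index therefore satisfies \([R_i(t^\ast)-G_i(t^\ast)-m]_+>0\), which is equivalent to \(R_i(t^\ast)>G_i(t^\ast)+m\).

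The only real step is showing \(t^\ast<k_i\). Here I would use \(R_i(k_i)=G_i(k_i)=1\), which gives \(R_i(k_i)-G_i(k_i)-m=-m\le 0\) because the margin satisfies \(m\ge 0\). So \(t=k_i\) can never produce a strict violation, and \(t^\ast\) must be a proper prefix rank \(t^\ast<k_i\). As a byproduct, this also shows \(k_i\ge 2\) whenever \(\chi_i=1\): a single feasible expert never activates the penalty.

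The main subtlety is this terminal-index exclusion, which relies on \(m\ge 0\). If one used the main-text margin \(m_{\mathrm{maj}}\) with no sign constraint, the statement could fail. I would therefore state explicitly that \(m=m_{\mathrm{maj}}\ge 0\), as in the appendix definition.

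Finally, I would note that \(R_i\) is computed on the nonincreasing rearrangement \(q_i^{\downarrow}\), so \(R_i(t^\ast)\) is the mass on the top \(t^\ast\) experts. This justifies the interpretation of the converse as excessive mass on the top-ranked experts. Tie-breaking in the sort does not affect the argument, since \(R_i(t)\) is invariant to how ties are ordered.
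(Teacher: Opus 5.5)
Your proposal is correct and matches the paper's proof step for step. In the forward direction, the vanishing positive parts give $v_i=0$ and hence $\chi_i=0$. In the converse, $v_i>0$ gives a violating index, and $R_i(k_i)=G_i(k_i)=1$ together with $m\ge 0$ rules out $t=k_i$. Your extra remarks are also correct and go slightly beyond what the paper spells out: the JS term is finite, the penalty cannot activate when $k_i=1$, and the main-text margin $m_{\mathrm{maj}}$ must be nonnegative for the converse to hold.
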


\begin{proof}
By definition,
\[
v_i
=
\max_{1\le t\le k_i}
\left[
R_i(t)-G_i(t)-m
\right]_{+},
\qquad
\chi_i=\mathbf 1[v_i>0].
\]
If
\[
R_i(t)\le G_i(t)+m
\qquad
\forall t=1,\ldots,k_i,
\]
then
\[
R_i(t)-G_i(t)-m\le 0
\qquad
\forall t=1,\ldots,k_i.
\]
Therefore,
\[
\left[
R_i(t)-G_i(t)-m
\right]_{+}=0
\qquad
\forall t,
\]
and hence
\[
v_i=0.
\]
It follows immediately that
\[
\chi_i=\mathbf 1[v_i>0]=0.
\]
The sample-level contribution to the rank-majorization JS penalty is
\[
d_i\,\chi_i\,\operatorname{JS}(q_i^{\downarrow}\|g_i),
\]
which is zero whenever \(\chi_i=0\). Thus, sample \(i\) contributes no rank
penalty.

Conversely, suppose \(\chi_i=1\). Then \(v_i>0\). By the definition of \(v_i\),
there must exist at least one rank prefix \(t\in\{1,\ldots,k_i\}\) such that
\[
\left[
R_i(t)-G_i(t)-m
\right]_{+}>0.
\]
This is equivalent to
\[
R_i(t)-G_i(t)-m>0,
\]
or
\[
R_i(t)>G_i(t)+m.
\]
Because both \(q_i^{\downarrow}\) and \(g_i\) are probability vectors,
\[
R_i(k_i)=G_i(k_i)=1.
\]
For \(m\ge 0\), the inequality
\[
R_i(k_i)>G_i(k_i)+m
\]
cannot hold, since it would require
\[
1>1+m.
\]
Therefore, any activating violation must occur at a strict prefix
\(t<k_i\). Hence \(\chi_i=1\) only when the sorted router allocation places more
mass in its top \(t\) experts than the geometric reference profile allows,
up to margin \(m\). This is precisely the sense in which the penalty is activated
only by excessively top-heavy expert allocation.
\end{proof}

\section{Pseudo Labeling}
\label{app:pseudo_labeling}
We use three public glaucoma fundus-image cohorts with complementary annotation
patterns. REFUGE is a Chinese retinal fundus glaucoma benchmark with clinical
glaucoma labels and optic-disc/optic-cup annotations from seven expert ophthalmologists, but without dense per-expert diagnostic decisions \citep{orlando2020refuge}. CHAKSU is an
Indian-ethnicity glaucoma dataset acquired using multiple fundus cameras and
provides optic-disc/optic-cup contours together with binary glaucoma decisions
from five expert ophthalmologists \citep{kumar2023chakṣu}. ORIGA is a
Singaporean fundus-image dataset with image-level glaucoma labels and
optic-disc/optic-cup annotations, but without the multi-expert decision
structure needed for expert-routing supervision \citep{zhang2010origa}.
Because these cohorts differ in acquisition devices, population characteristics,
annotation protocols, and image quality, they provide a natural setting for
studying expert routing under heterogeneous supervision and distribution shift. 
\subsection{REFUGE semi-synthetic label generation}
\label{subsec:refuge_semisynthetic_labels}
For REFUGE, structural annotations are available, but dense per-expert decisions
are incomplete. We therefore generate semi-synthetic per-expert outcomes from
annotated optic-disc and optic-cup geometry. This construction is motivated by
the clinical role of optic-nerve-head morphology in glaucoma assessment:
vertical cup-to-disc ratio (vCDR) is a widely used and relatively robust index
of glaucomatous neuroretinal rim loss, and cup-to-disc measurements are commonly used in glaucoma screening and diagnosis \citep{foster2002definition, kavitha2010early, hagiwara2018computer}. We treat the generated labels as noisy
semi-synthetic supervision rather than true expert annotations.

\paragraph{Geometric biomarker extraction.}
For case \(i\) and expert/annotator \(j\), let the fitted optic-disc and
optic-cup ellipses be
\[
e^{D}_{ij}
=
(w^{D}_{ij},h^{D}_{ij},\theta^{D}_{ij},c^{D}_{x,ij},c^{D}_{y,ij}),
\qquad
e^{C}_{ij}
=
(w^{C}_{ij},h^{C}_{ij},\theta^{C}_{ij},c^{C}_{x,ij},c^{C}_{y,ij}).
\]
For an ellipse with width \(w\), height \(h\), and rotation angle \(\theta\),
let \(a=w/2\) and \(b=h/2\) denote the semi-major and semi-minor axes. We define
the vertical diameter functional
\[
\mathrm{VD}(w,h,\theta)
=
2\sqrt{
a^2\sin^2\theta
+
b^2\cos^2\theta
}.
\]
Set
\[
\mathrm{VD}^{D}_{ij}
=
\mathrm{VD}(w^{D}_{ij},h^{D}_{ij},\theta^{D}_{ij}),
\qquad
\mathrm{VD}^{C}_{ij}
=
\mathrm{VD}(w^{C}_{ij},h^{C}_{ij},\theta^{C}_{ij}).
\]
We then compute three structural biomarkers:
\[
\mathrm{vCDR}_{ij}
=
\frac{\mathrm{VD}^{C}_{ij}}{\mathrm{VD}^{D}_{ij}},
\qquad
\mathrm{aCDR}_{ij}
=
\frac{a^{C}_{ij}b^{C}_{ij}}{a^{D}_{ij}b^{D}_{ij}},
\qquad
\mathrm{dec}_{ij}
=
\frac{
\left\|
(c^{C}_{x,ij},c^{C}_{y,ij})
-
(c^{D}_{x,ij},c^{D}_{y,ij})
\right\|_2
}{
\mathrm{VD}^{D}_{ij}
}.
\]
Here, \(\mathrm{vCDR}_{ij}\) measures vertical cup enlargement,
\(\mathrm{aCDR}_{ij}\) measures relative cup area, and \(\mathrm{dec}_{ij}\)
captures cup--disc decentration. These quantities summarize clinically
meaningful optic-nerve-head structure while remaining simple enough to support
interpretable semi-synthetic label generation.

We define the geometry feature vector
\[
\phi_{ij}
=
\begin{bmatrix}
1 \\
\mathrm{vCDR}_{ij} \\
\mathrm{aCDR}_{ij} \\
\log(\mathrm{VD}^{D}_{ij}) \\
\mathrm{dec}_{ij}
\end{bmatrix}.
\]
For each expert \(j\), we fit a logistic model mapping geometry to a glaucoma
evidence score:
\begin{equation}
s_{ij}
=
w_j^\top \phi_{ij},
\qquad
p^{\mathrm{raw}}_{ij}
=
\sigma(s_{ij}),
\qquad
\sigma(t)
=
\frac{1}{1+\exp(-t)} ,
\end{equation}
where \(w_j\) is estimated by maximum likelihood on the training split
\(\mathcal D_{\mathrm{tr}}\). All fitting and calibration are performed using
training/calibration data only to avoid test-set leakage.

\paragraph{Per-expert temperature calibration.}
Because the raw logistic scores may be over- or under-confident, we calibrate
each expert-specific score using temperature scaling \citep{guo2017calibration}:
\begin{equation}
p^{\mathrm{cal}}_{ij}
=
\sigma\!\left(\frac{s_{ij}}{T_j}\right),
\qquad
T_j>0 .
\end{equation}
The temperature \(T_j\) is selected on a held-out calibration split
\(\mathcal D_{\mathrm{cal}}\) by minimizing negative log-likelihood. Calibration
quality is then evaluated on the test split \(\mathcal D_{\mathrm{te}}\).

\paragraph{Geometry-only case difficulty.}
Let \(\mathrm{vCDR}^{\mathrm{med}}_i\) denote the median vCDR across available
annotations for case \(i\). We define a geometry-only difficulty score
\begin{equation}
\beta_i
=
\exp\!\left(
-\kappa
\left|
\mathrm{vCDR}^{\mathrm{med}}_i-\tau
\right|
\right),
\qquad
\kappa>0 .
\end{equation}
The threshold \(\tau\) is chosen as the vCDR cutoff that maximizes the Youden
index on the training data, a standard criterion for selecting diagnostic
operating points from ROC analysis \citep{youden1950index, ruopp2008youden}.
Thus, anatomically borderline cases near \(\tau\) receive larger
\(\beta_i\), reflecting greater ambiguity in geometry-based diagnosis.

\paragraph{Case-specific operating point.}
Given expert-level baseline operating characteristics
\((\mathrm{Se}_j,\mathrm{Sp}_j)\), we construct case-specific sensitivity and
specificity as
\begin{align}
\operatorname{logit}(\mathrm{Se}_{ij})
&=
\operatorname{logit}(\mathrm{Se}_j)
+
\alpha_j
\left(
p^{\mathrm{cal}}_{ij}-\rho
\right)
-
b\beta_i,
\\
\operatorname{logit}(\mathrm{Sp}_{ij})
&=
\operatorname{logit}(\mathrm{Sp}_j)
-
\gamma_j
\left(
p^{\mathrm{cal}}_{ij}-\rho
\right)
-
d\beta_i,
\end{align}
where \(\alpha_j,\gamma_j\ge 0\) control how strongly calibrated structural
evidence changes the expert operating point, \(b,d\ge 0\) penalize borderline
cases, and \(\rho\) is a reference evidence level. This construction allows
expert performance to vary with case morphology while preserving each expert's
baseline sensitivity--specificity profile.

\paragraph{Poisson--binomial sampling with correctness constraints.}
Let \(c_{ij}\in\{0,1\}\) indicate whether expert \(j\) is correct on case \(i\):
\[
c_{ij}
=
\mathbf 1\{\hat y_{ij}=y_i\}.
\]
Conditioned on the true label and geometry, we model correctness as
\begin{equation}
c_{ij}
\mid
(y_i,\phi_{ij})
\sim
\mathrm{Bernoulli}(\Phi_{ij}),
\qquad
\Phi_{ij}
=
y_i\,\mathrm{Se}_{ij}
+
(1-y_i)\,\mathrm{Sp}_{ij}.
\end{equation}
The number of correct experts,
\[
K_i=\sum_{j=1}^{J}c_{ij},
\]
therefore follows a Poisson--binomial distribution because it is the sum of
independent Bernoulli variables with non-identical success probabilities
\citep{chen1997statistical}. To avoid unrealistically poor synthetic expert panels,
we condition on a minimum quality constraint \(K_i\ge k_{\min}\), yielding a
conditional Bernoulli law.

We sample exactly under this constraint using a dynamic program. Define suffix
probabilities
\[
q_j(s)
=
\Pr\!\left(
\sum_{t=j}^{J} c_{it}=s
\;\middle|\;
y_i,\phi_{i1:J}
\right),
\]
with boundary conditions \(q_{J+1}(0)=1\) and \(q_{J+1}(s>0)=0\). The recursion is
\[
q_j(s)
=
(1-\Phi_{ij})q_{j+1}(s)
+
\Phi_{ij}q_{j+1}(s-1),
\qquad
s\ge 0 .
\]
We first draw
\[
K_i
\sim
\Pr(K_i=k\mid K_i\ge k_{\min})
=
\frac{q_1(k)}{Z_i},
\qquad
k\in\{k_{\min},\ldots,J\},
\]
where
\[
Z_i
=
\sum_{u=k_{\min}}^{J}q_1(u).
\]
Given the sampled total number of correct experts, we sequentially sample
\((c_{i1},\ldots,c_{iJ})\) using
\begin{equation}
\Pr
\left(
c_{ij}=1
\;\middle|\;
\sum_{t=j}^{J}c_{it}=r
\right)
=
\frac{
\Phi_{ij}q_{j+1}(r-1)
}{
q_j(r)
}.
\end{equation}
Finally, we instantiate the semi-synthetic expert label as
\begin{equation}
\hat y_{ij}
=
\begin{cases}
y_i, & c_{ij}=1,\\
1-y_i, & c_{ij}=0,
\end{cases}
\qquad
\text{so that}
\qquad
\Pr(\hat y_{ij}=1\mid y_i,\phi_{ij})
=
y_i\Phi_{ij}
+
(1-y_i)(1-\Phi_{ij}).
\end{equation}

This procedure produces expert-specific labels that are tied to clinically
interpretable optic-disc and optic-cup morphology, preserve heterogeneous expert
operating characteristics, and enforce a weak panel-quality constraint without
assuming that all experts are equally reliable.

\subsection{ORIGA synthetic label generation}
\label{subsec:origa_synthetic_labels}

ORIGA does not provide multi-expert annotations. We therefore induce plausible
expert outcomes by retrieving morphologically similar labeled cases from REFUGE
and CHAKSU in a pretrained retinal representation space. This procedure follows
the intuition of case-based ophthalmic decision support, where visually similar
fundus images serve as reference cases for diagnosis, while the retrieved labels
are treated as noisy pseudo-supervision rather than ground-truth expert labels.

For each image, we extract embeddings using DINOv2 \cite{oquab2023dinov2} and RETFound-MAE \cite{zhou2023foundation} after
cropping the retinal region, followed by $\ell_2$ normalization so that
dot-product similarity approximates cosine similarity. For an ORIGA query image,
we restrict the retrieval pool by its ground-truth class: ORIGA images with
$y=1$ are matched only to REFUGE/CHAKSU cases with $y=1$, and ORIGA images with $y=0$ are matched only to REFUGE/CHAKSU cases with $y=0$. Within this
label-matched pool, we compute similarity between the ORIGA embedding and all
candidate embeddings, retrieve the most similar cases, fuse DINOv2 and
RETFound-MAE similarity scores after per-query normalization, and retain the
top seven nearest neighbors. The synthetic expert outcomes for the ORIGA image
are then obtained from the retrieved neighbors' expert labels,
$\hat y^{\mathrm{exp}}$, and used as pseudo-annotations.

We use cropped rather than uncropped fundus images because the cropped retinal
region better emphasizes clinically relevant morphology, including optic-disc
and cup-related structure. 
\begin{wrapfigure}[14]{r}{0.49\textwidth}
    \centering
    \includegraphics[width=0.48\textwidth]{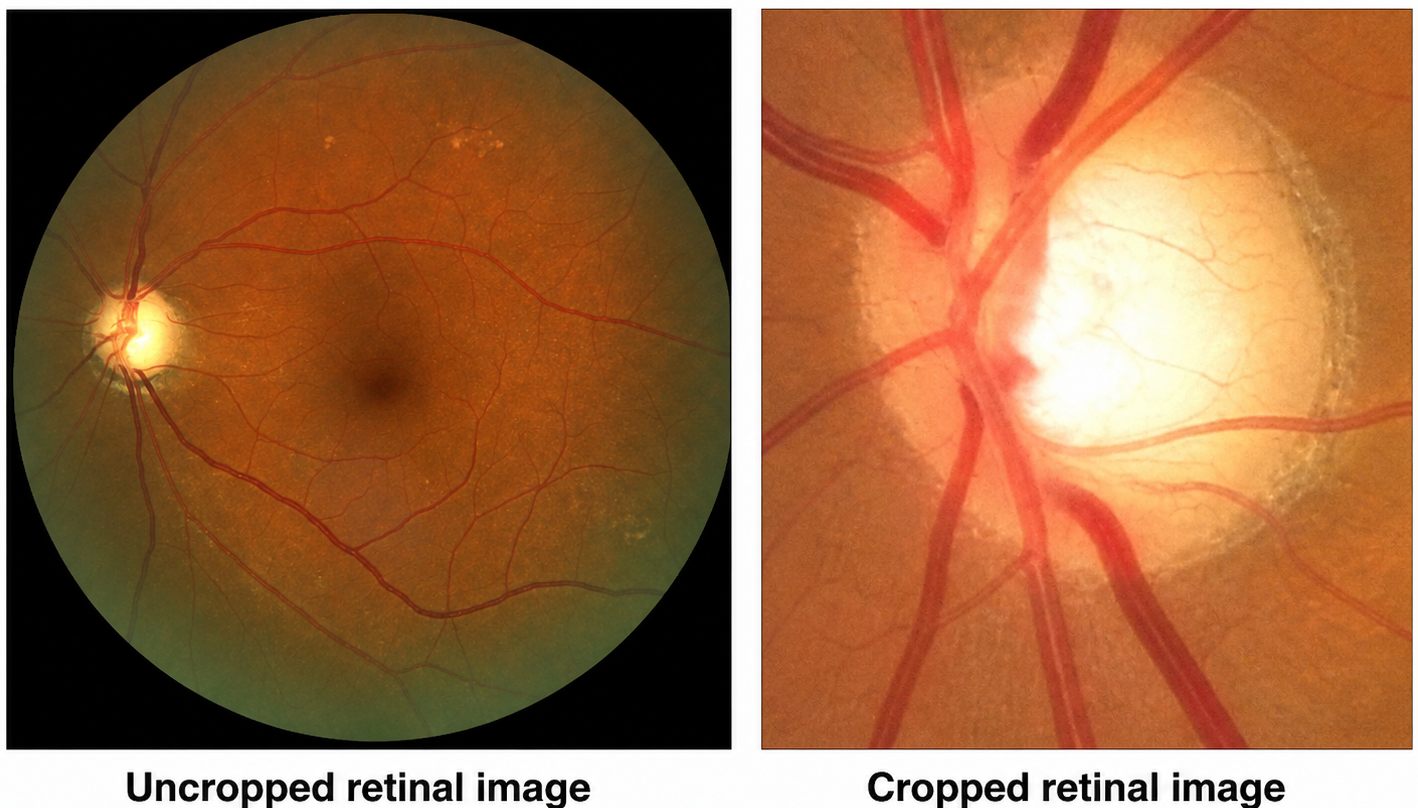}
    \caption{Retinal-region cropping used before similarity-based retrieval.}
    \label{fig:retinal}
\end{wrapfigure}
Moreover, this assumption is consistent with our
risk-stratified analysis in Figure~\ref{fig:risk_stratified_cost}, where
model-reliability risk primarily degrades the frozen AI model, whereas
structural glaucoma risk, reflected by features such as aCDR and vCDR, is a more clinically meaningful source of case difficulty. Thus, similarity search in a cropped retinal representation space is intended to retrieve reference cases
with comparable structural appearance rather than merely similar global image context.
The resulting ORIGA pseudo-labels remain sparse: the top retrieved neighbors may contain repeated labels from the same expert, and not every expert is represented
for every ORIGA case. This sparsity is desirable for our setting because it
preserves the heterogeneous and incomplete expert-availability structure that
the proposed mask-aware router is designed to handle.
\section{Experiment implementation details}
\label{sec:implementation}

This section provides the implementation details required to reproduce the main
experimental results, including the fixed data split, decision-time features,
action masks, optimization procedure, hyperparameter selection protocol,
baseline adaptations, test-time evaluation, and computational resources. All
methods are trained, selected, and evaluated under the same protocol unless
explicitly stated otherwise.

\subsection{Data splits and leakage control}
\label{subsec:data_splits}

All experiments use a fixed train/validation/test split constructed before
model comparison. The split contains 3195 retinal fundus cases from REFUGE,
CHAKSU, and ORIGA. The training split is used for fitting model parameters,
constructing train-only priors, estimating any feature-normalization statistics,
and fitting any auxiliary components required by the router. The validation
split is used only for early stopping, hyperparameter selection, pruning, and
model selection. The test split is held out until final evaluation and is not
used for hyperparameter search, prior construction, threshold selection, or
early stopping.

\begin{table}[t]
\centering
\small
\caption{
Fixed train/validation/test split used for all methods. The same split is used
for MPD$^2$-Router and every baseline. Counts are reported by cohort.
}
\label{tab:data_split}
\begin{tabular}{lrrrr}
\toprule
Split & REFUGE & CHAKSU & ORIGA & Total \\
\midrule
Train & 400 & 686 & 325 & 1411 \\
Validation & 400 & 323 & 162 & 885 \\
Test & 400 & 336 & 163 & 899 \\
\bottomrule
\end{tabular}
\end{table}

The frozen diagnostic backbone is trained only on the designated training data
and is not updated during router training. All downstream routing methods
therefore operate on the same frozen decision-time state rather than on raw
images. Any statistics used to normalize continuous routing features are
estimated on the training split and then applied unchanged to validation and
test. Similarly, the group-specific distribution priors used by MPD$^2$-Router
are constructed from the training split only and are frozen during validation
and test evaluation.

\subsection{Decision-time state and masked action space}
\label{subsec:decision_state_masks}

To ensure a fair comparison, every method receives the same decision-time
feature vector:
\[
z_i =
[
\texttt{prob\_1},
\texttt{logit\_0},
\texttt{logit\_1},
\texttt{vim\_risk\_z},
\texttt{quality\_risk},
\texttt{uncertainty},
\texttt{vCDR},
\texttt{aCDR}
]_i .
\]
These features summarize the frozen AI classifier output, model uncertainty,
OOD risk, image-quality risk, and structural glaucoma biomarkers. The router
therefore compares methods in the clinically relevant downstream decision
space, rather than giving MPD$^2$-Router privileged access to additional image
information.

The unified action space contains the AI action and $M=12$ human-expert actions.
The AI action is always feasible. Human-expert feasibility is sample-specific
and is encoded by the binary expert mask
$m_i^{\mathrm{exp}}\in\{0,1\}^{M}$. The full action mask is
\[
m_i^{\mathrm{act}} =
[1, (m_i^{\mathrm{exp}})_1, \ldots, (m_i^{\mathrm{exp}})_M].
\]
Unavailable experts are masked at both training and inference time. Thus no
method can route a case to an expert whose label is unavailable for that case.

For MPD$^2$-Router, the defer head outputs a soft deferral mass
$d_i\in(0,1)$ and the expert-allocation head outputs a conditional distribution
$q_i$ over feasible experts. The conditional expert allocation is
\begin{equation}
q_{i,j}
=
\frac{a_{i,j}s_{i,j}^{\sharp}}
{\sum_{k=1}^{M}a_{i,k}s_{i,k}^{\sharp}},
\qquad j=1,\ldots,M ,
\label{eq:conditional_expert_allocation_impl}
\end{equation}
where $a_i$ is the masked allocation distribution and $s_i^\sharp$ is the
repaired stochastic expert-support vector. The denominator is clamped by
$\varepsilon=[\text{e.g., }10^{-8}]$ for numerical stability. The full action
policy is
\[
\tilde \pi_i =
[
1-d_i,
d_i q_{i,1},
\ldots,
d_i q_{i,M}
].
\]
As a final safety layer, we apply the masked-simplex projection
\[
\Pi_{\Delta(m)}(v)
=
\frac{v\odot m}{\langle \mathbf{1},v\odot m\rangle},
\]
which is well-defined whenever at least one feasible action is present, and set
\[
\pi_i=\Pi_{\Delta(m_i^{\mathrm{act}})}(\tilde\pi_i).
\]
This guarantees that the final policy places zero probability on unavailable
actions.

\subsection{Training objective and optimization}
\label{subsec:training_objective_optimization}

MPD$^2$-Router is trained to minimize the clinical--operational deployment
objective directly, rather than a per-action correctness target. For sample
\(i\), the clinical cost of retaining the frozen AI model is
\[
C_i^{\mathrm{ai}}
=
c_{\mathrm{fn}}y_i(1-p_i^{\mathrm{ai}})
+
c_{\mathrm{fp}}(1-y_i)p_i^{\mathrm{ai}},
\]
where \(p_i^{\mathrm{ai}}\) is the frozen AI probability for glaucoma. The
clinical cost of expert \(j\) is
\[
C_{i,j}^{\mathrm{exp}}
=
c_{\mathrm{fn}}\mathbf{1}[y_i=1,\hat y_{i,j}=0]
+
c_{\mathrm{fp}}\mathbf{1}[y_i=0,\hat y_{i,j}=1].
\]
Unless otherwise stated, we use asymmetric clinical costs
\(c_{\mathrm{fn}}=2.0\) and \(c_{\mathrm{fp}}=1.5\), reflecting the higher
clinical penalty of missed glaucoma relative to false referral.

The clinical--operational routing loss is
\[
\mathcal L_{\mathrm{cost}}(\theta)
=
\frac{1}{N}
\sum_{i=1}^{N}
\left[
C_i^{\mathrm{ai}}
+
d_i
\left(
\sum_{j=1}^{M}
q_{i,j}
\left(
C_{i,j}^{\mathrm{exp}}
+
\gamma\kappa_j
\right)
-
C_i^{\mathrm{ai}}
\right)
\right],
\]
where \(d_i\) is the soft deferral mass, \(q_{i,j}\) is the conditional expert
allocation probability, \(\kappa_j\) is the expert-tier cost, and \(\gamma\)
controls the contribution of operational expert cost. The full objective is
\[
\mathcal J(\theta)
=
\mathcal L_{\mathrm{cost}}(\theta)
+
w_{\mathrm{GSDP}}\mathcal L_{\mathrm{GSDP}}(\theta)
+
w_{\mathrm{rank}}\mathcal L_{\mathrm{rank}}(\theta)
+
P_{\mathrm{AL}}(\bar d),
\qquad
\bar d=\frac{1}{N}\sum_{i=1}^{N}d_i .
\]
Here \(\mathcal L_{\mathrm{GSDP}}\) is the group-specific distribution-prior
regularizer, \(\mathcal L_{\mathrm{rank}}\) is the rank-majorization JS
regularizer, and \(P_{\mathrm{AL}}\) is the augmented-Lagrangian penalty for the
soft deferral-budget constraint.

Models are optimized with mini-batch \texttt{AdamW}. Feature normalization
statistics are computed on the training split only and reused for validation
and test. Validation performance is evaluated after each epoch, and the best
validation checkpoint after the warmup period is restored before final
evaluation. Hyperparameters are selected only on the validation split using the
adaptive fine-tuning protocol in Section~\ref{subsec:adaptive_finetuning_impl}.
The held-out test set is evaluated once after model and hyperparameter
selection; no test-set information is used for training, pruning,
hyperparameter optimization, or checkpoint selection. The implementation details
needed to reproduce the training protocol are summarized in
Table~\ref{tab:training_hparams}; exact resolved configurations are provided in
the released experiment files.
\begin{table}[t]
\centering
\small
\caption{
Training and model-selection details for MPD$^2$-Router. Hyperparameters marked
as validation-selected are chosen by the adaptive fine-tuning protocol and are
reported in the released experiment configuration files.
}
\label{tab:training_hparams}
\begin{tabular}{ll}
\toprule
Item & Setting \\
\midrule
Framework & PyTorch \\
Optimizer & \texttt{AdamW} \\
Learning rate & validation-selected \\
Weight decay & \(10^{-4}\) for router parameters \\
Batch size & 64 \\
Maximum epochs & 150 \\
Warmup epochs & validation-selected \\
Early-stopping patience & 18 validation epochs \\
Clinical costs & \(c_{\mathrm{fn}}=2.0,\ c_{\mathrm{fp}}=1.5\) \\
Prior badness costs &
\(c_{\mathrm{fn}}^{\mathrm{prior}}=1.8,\ c_{\mathrm{fp}}^{\mathrm{prior}}=1.2\) \\
Tier-cost weight \(\gamma\) & validation-selected \\
GSDP / rank--JS weights & validation-selected \\
Augmented-Lagrangian parameters & validation-selected \\
Feature normalization & train-split statistics only \\
Checkpoint selection & best validation selection score after warmup \\
Test evaluation & held-out test split, evaluated once after selection \\
Random seeds & main repeated-seed experiments use \(42\)--\(52\) \\
\bottomrule
\end{tabular}
\end{table}

\subsection{Adaptive fine-tuning and hyperparameter selection}
\label{subsec:adaptive_finetuning_impl}

We implement adaptive fine-tuning as a lightweight Optuna-based
hyperparameter optimization loop around MPD$^2$-Router training. Each trial
samples a candidate configuration, maps it to the router's optimization,
prior-regularization, and augmented-Lagrangian settings, and trains the model
with validation-based early stopping and Hyperband pruning. The held-out test
set is never used during hyperparameter search or checkpoint selection.

The search space is 16-dimensional and covers the optimizer, deferral-cost
weighting, prior construction, anti-collapse regularization, and constraint
control:
\[
\begin{aligned}
&\texttt{lr},\ \texttt{warmup\_epochs},\ \texttt{gamma\_tier},\
\texttt{tau\_bad},\ \texttt{w\_gsdp},\ \texttt{w\_rank\_js},\\
&\texttt{global\_uniform\_mix},\ \texttt{family\_uniform\_mix},\
\texttt{group\_uniform\_mix},\
\texttt{family\_n0},\ \texttt{group\_n0},\\
&\texttt{global\_mix},\ \texttt{clip\_ceiling},\
\texttt{clip\_slack},\ \texttt{al\_mu},\
\texttt{al\_lr\_lambda}.
\end{aligned}
\]
The geometric clipping anchors are not tuned separately; they are derived from
\texttt{clip\_ceiling} and \texttt{clip\_slack} through the support-size cap
\[
\texttt{clip\_max}(k)
=
\min\!\left(
\texttt{clip\_ceiling},
g_{k,1}(\rho_k)+\texttt{clip\_slack}
\right).
\]

\paragraph{Adaptive HPO protocol.}
We use a multivariate Tree-structured Parzen Estimator (TPE) sampler with
group-based conditional sampling. A hand-tuned configuration is enqueued as the
first trial to seed the search, followed by 10 random startup trials before TPE
acquisition is used. Hyperband pruning is applied with the minimum and maximum
resources tied to the warmup epoch count and the maximum training budget,
respectively, and reduction factor \(3\). Unless otherwise stated, each study
uses 80 Optuna trials.

Within each trial, pruning and early stopping are driven by the
constraint-aware validation score
\[
\mathrm{es\_base} + 10\,\mathrm{es\_violation},
\]
where \(\mathrm{es\_violation}\) measures soft deferral-budget violation. The
best validation checkpoint under this rule is restored before trial evaluation.
The outer Optuna objective uses an augmented Tchebycheff scalarization over
clinical loss, \(-\)MCC, \(-\)AUPRC, soft tier cost, and constraint violation,
relative to an EMA-updated utopia reference. We use weights
\((0.35,0.15,0.10,0.05,0.05)\) and augmentation coefficient \(\rho=0.05\).
Trials producing \texttt{NaN} losses or metrics are pruned automatically.

After the search terminates, the selected configuration is retrained with an
extended budget using the same constraint-aware validation-selection rule and
is evaluated once on the held-out test set. Despite the breadth of the search
space, the procedure is computationally lightweight: each trial takes
approximately 2.5 minutes on the compute worker.

\subsection{Baseline implementation and fairness}
\label{subsec:baseline_implementation}

All baselines are evaluated on the same fixed train/validation/test split, the
same decision-time state $z_i$, and the same masked sparse action space.
Unavailable experts are masked during training and inference whenever the
baseline formulation permits masking. When a method was originally designed for
a different action structure, we use the closest masked sparse adaptation needed
to make action-level comparison meaningful.

This benchmark is deliberately favorable to the baselines in two ways. First,
all methods route from a highly informative downstream state derived from the
frozen AI model, uncertainty/OOD pipeline, image-quality signal, and structural
glaucoma biomarkers. Second, in the benchmark configuration,
\texttt{selection\_metric} is set to \texttt{"surrogate"} while
\texttt{cost\_aware\_inference} is enabled. Therefore, several baselines retain
their original surrogate-style training objectives while also benefiting from
cost-aware routing at inference time.

The baseline adaptations are as follows. Verma-OvA is trained against
per-action correctness targets under the feasible action mask. Narasimhan-PH
fits separate predictors for AI correctness and expert correctness and then
applies cost-aware score adjustment during routing. Mao-TwoStage uses a
score-based surrogate constructed from AI correctness and expert correctness,
with optional base-cost terms. Hemmer-MoE is trained with true expert labels
inside a masked team cross-entropy over the classifier and available experts.
Keswani et al. is included as a committee-based multi-expert collaboration
baseline. Because its native formulation produces a weighted committee rather
than a single routed action, we evaluate a masked top-1 projection for
action-level comparisons.

Although the task label is a single binary label $y_i$, several baselines are
trained using derived per-action correctness patterns such as ``AI wrong,
expert 1 correct, expert 2 wrong, expert 3 unavailable.'' These targets are
substantially closer to the routing answer than ordinary binary classification.
In contrast, MPD$^2$-Router is not trained to output a per-action correctness
vector. It directly optimizes the clinical--operational routing objective and
learns when to retain the AI, when to defer, and which feasible expert to select
under asymmetric clinical cost, expert-tier cost, availability masks, and
deferral-budget constraints.

\subsection{Test-time evaluation}
\label{subsec:test_time_evaluation}

After validation-based model selection, each method is evaluated on the held-out
test split. Unless explicitly stated otherwise, soft policies are converted to
hard actions by masked argmax:
\[
\hat a_i = \arg\max_{a\in\mathcal A_i} \pi_{i,a}.
\]
If $\hat a_i$ is the AI action, the final prediction is the frozen AI prediction.
If $\hat a_i$ is a human-expert action, the final prediction is the selected
expert's label. Because all policies are projected onto the masked simplex, test
routing cannot select unavailable experts.

We report standard diagnostic metrics, including accuracy, precision, recall,
specificity, F1, and MCC. We also report clinical cost, expert cost,
total cost, deferral rate, deferral-budget violation, and expert-utilization
statistics. Expert-collapse diagnostics are computed on the human-routed subset
and include MaxShare, entropy-based effective number of experts, and normalized
Gini. Cost--performance trade-off figures use the same held-out test predictions
and the same cost definitions for all methods.

When results are averaged across random seeds, we report mean and standard
deviation over 10 random seeds. When a single run
is reported, the selected seed is fixed before test evaluation and all methods
use the same seed where applicable.

\subsection{Computational resources and runtime}
\label{subsec:compute_resources}
All routing experiments are lightweight and can be reproduced without
distributed training. Experiments are run on a single compute worker with
a 13th-generation Intel Core i9-13905H CPU (14 physical / 20 logical cores),
an NVIDIA GeForce RTX 4060 Laptop GPU (compute capability~8.9) with
8~GiB GPU memory, and 32~GiB system memory. The operating system is
Windows~11, and the main software stack is Python~3.12, PyTorch~2.10
(CUDA~12.8 build), Optuna~4.8, NumPy~2.4, and scikit-learn~1.8.

The routing models are small because they operate on an eight-dimensional
decision-time state rather than full images. Consequently, the dominant runtime
comes from repeated validation-based hyperparameter search rather than from a
single training run. Hyperband pruning further reduces wasted computation by
terminating weak configurations early. In practice, each adaptive fine-tuning
trial takes approximately 2.5 minutes, making it feasible to run many trials and
select robust configurations without large-scale compute.

\section{Experiment results}
\subsection{AI-retention safety under distribution shift}
\label{app:ai_retention}
Table~\ref{tab:app_ai_retention} shows that MPD$^2$-Router improves AI-retention safety by becoming increasingly selective under distribution shift. An effective router should retain a nontrivial AI share while keeping retained-AI accuracy high and clinical cost low. On in-distribution REFUGE, the frozen classifier is already reliable, yet MPD$^2$-Router still preserves an autonomous share of 0.755 with near-perfect retained-AI accuracy and low clinical cost. When OOD shift, AI-only inference becomes unsafe: clinical cost grows from 0.179 (REFUGE) to 0.408 (CHAKSU) and 0.773 (ORIGA). Baseline routers exhibit two opposing failure modes, retaining a large AI share at high cost or reducing retained-AI error only through aggressive deferral that leaves few cases to the AI. MPD$^2$-Router instead occupies a more Pareto-favorable region of the coverage--risk trade-off: on far-OOD ORIGA, it retains 20.2\% of cases with 100\% retained-AI accuracy and zero clinical cost, avoiding Hemmer-MoE's overly conservative 10.4\% retention. These results indicate MPD$^2$-Router's selective routing rather than uniform abstention or indiscriminate deferral.
\begin{table*}[t]
\centering
\small
\setlength{\tabcolsep}{4pt}
\renewcommand{\arraystretch}{1.1}
\caption{AI-retained subset performance under distribution shift. \emph{Share} = fraction of cases kept under autonomous AI prediction; \emph{Acc}, \emph{MCC}, and \emph{CC} (ClinicalCost) are computed on the retained subset. CC is the only minimization target ($\downarrow$); others maximize ($\uparrow$). Test set sizes: REFUGE $n{=}400$, CHAKSU $n{=}336$, ORIGA $n{=}163$.}
\label{tab:app_ai_retention}
\begin{tabular}{@{}l rrrr rrrr rrrr@{}}
\toprule
 & \multicolumn{4}{c}{REFUGE} & \multicolumn{4}{c}{CHAKSU} & \multicolumn{4}{c}{ORIGA} \\
\cmidrule(lr){2-5} \cmidrule(lr){6-9} \cmidrule(lr){10-13}
Method & Share & Acc & MCC & CC & Share & Acc & MCC & CC & Share & Acc & MCC & CC \\
\midrule
AI-Only        & 1.000 & 0.883 & 0.610 & 0.179 & 1.000 & 0.732 & 0.455 & 0.408 & 1.000 & 0.491 & 0.263 & 0.773 \\
Mao-TwoStage   & 0.780 & 0.997 & 0.962 & 0.006 & 0.530 & 0.978 & 0.343 & 0.042 & 0.288 & 0.915 & 0.753 & 0.138 \\
Narasimhan     & 0.820 & 0.997 & 0.976 & 0.006 & 0.530 & 0.983 & 0.496 & 0.034 & 0.270 & 0.909 & 0.694 & 0.159 \\
Verma-OvA      & 0.403 & 1.000 & 1.000 & 0.000 & 0.134 & 0.978 & 0.000 & 0.044 & 0.129 & 0.857 & 0.000 & 0.286 \\
Hemmer-MoE     & 0.535 & 1.000 & 1.000 & 0.000 & 0.313 & 0.990 & 0.000 & 0.019 & 0.104 & 1.000 & 0.000 & 0.000 \\
Keswani        & 0.890 & 0.961 & 0.771 & 0.077 & 0.134 & 1.000 & 0.000 & 0.000 & 0.472 & 0.818 & 0.319 & 0.364 \\
\textbf{MPD$^2$-Router} & 0.755 & 0.997 & 0.941 & 0.007 & 0.509 & 0.982 & 0.000 & 0.035 & 0.202 & 1.000 & 0.000 & 0.000 \\
\bottomrule
\end{tabular}
\end{table*}

\subsection{Cost--performance trade-off analysis}
Figures~\ref{fig:app_f1_cost_tradeoff} and~\ref{fig:app_mcc_cost_tradeoff} evaluate whether performance gains are achieved at a clinically and operationally acceptable cost. MPD$^2$-Router consistently occupies the upper-left region of the trade-off space, achieving the highest F1 and MCC while also attaining the lowest total and clinical costs among the compared learned routing methods. AI-Only has zero deferral cost but substantially worse F1/MCC and much higher clinical cost, indicating that avoiding deferral is not clinically safe.  Conversely, several baselines improve predictive performance through more expensive or less selective deferral, particularly Hemmer-MoE and Verma-OvA, which incur substantially higher total and deferral costs. These results show that MPD$^2$-Router does not merely buy accuracy through excessive referral; instead, it learns a selective routing policy that improves diagnostic performance while reducing clinical harm and maintaining a favorable operational cost profile.
\label{app:pareto_tradeoffs}
\begin{figure}[t]
    \centering
    \includegraphics[width=\textwidth]{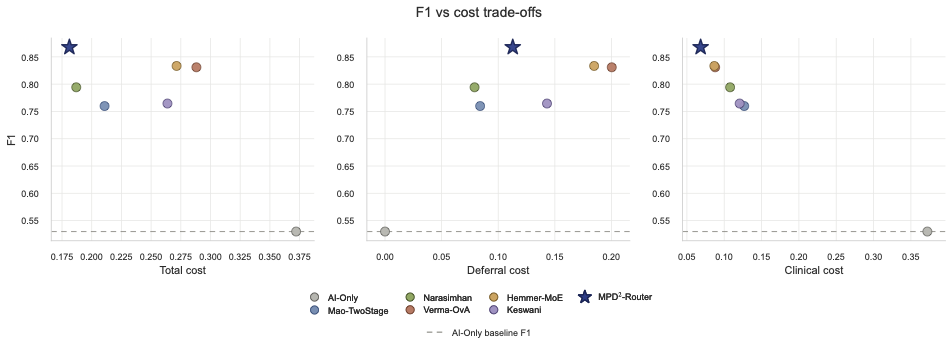}
    \caption{
    F1 versus total, deferral, and clinical cost. MPD$^2$-Router lies in the upper-left region of the trade-off space, indicating a Pareto-favorable operating point: it achieves the highest F1 while maintaining the lowest total and clinical cost among the compared methods.
    }
    \label{fig:app_f1_cost_tradeoff}
\end{figure}

\begin{figure}[t]
    \centering
    \includegraphics[width=\textwidth]{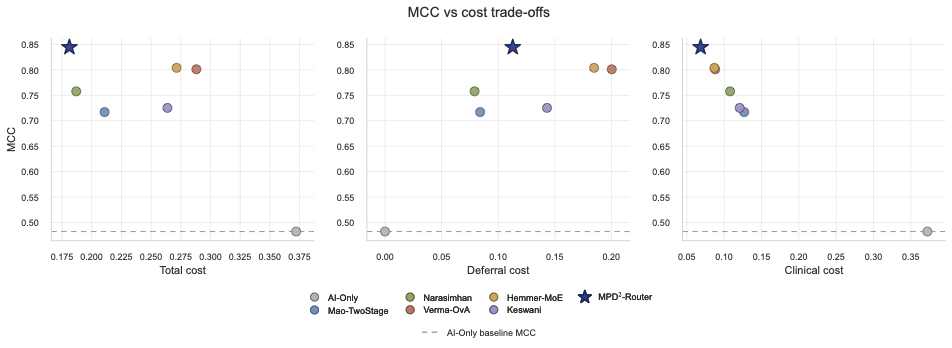}
    \caption{
    MCC versus total, deferral, and clinical cost. MPD$^2$-Router again occupies the Pareto-favorable region, achieving the strongest correlation-based classification performance without requiring the high deferral costs incurred by more defer-heavy baselines.
    }
    \label{fig:app_mcc_cost_tradeoff}
\end{figure}

\subsection{Expert performance heterogeneity and selective deferral}
\label{app:expert_performance}

\paragraph{Motivation and matched-subset protocol.}
A central premise of MPD$^2$-Router is that human readers are not interchangeable
deferral targets. This assumption is clinically realistic in glaucoma screening:
optic-disc assessment is known to exhibit substantial inter-observer variability,
with readers differing in cup-to-disc estimation, glaucomatous-damage judgment,
and sensitivity--specificity operating points
\citep{tielsch1988intraobserver,varma1992expert,abrams1994agreement, pourjavan2024evaluating}. We therefore evaluate each
expert only on the matched subset of test vcases for which that expert provides a
label. Formally, for expert \(j\), evaluation is restricted to
\(\mathcal I_j=\{i:m_{i,j}=1\}\), and MPD$^2$-Router is evaluated on the same
\(\mathcal I_j\). This controls for the heterogeneous availability mask and
ensures that the comparison measures routing quality rather than differences in
case exposure.

\begin{table}[h]
\centering
\caption{Per-expert performance on matched availability subsets. Sensitivity, specificity,
FN/FP counts, and clinical cost are reported alongside summary metrics to expose
operating-point heterogeneity that $F_1$ alone obscures. \textit{Best-oracle} is the per-case
selection of any reader producing the correct label and represents the achievable cohort
skyline. Within each cohort, rows are sorted by $F_1$.}
\label{tab:expert_table}
\small
\setlength{\tabcolsep}{4pt}
\begin{tabular}{lrrrrrrrrr}
\toprule
Expert & $n$ & Acc & Sens & Spec & $F_1$ & MCC & FN & FP & Clin.\ cost \\
\midrule
\textit{Best-oracle} & 899 & 1.000 & 1.000 & 1.000 & 1.000 & 1.000 & 0 & 0 & 0.000 \\
\midrule
\multicolumn{10}{l}{\textit{CHAKSU readers}} \\
chaksu\_expert\_1 & 415 & 0.947 & 0.921 & 0.953 & 0.864 & 0.834 &  6 &  16 & 0.087 \\
chaksu\_expert\_3 & 412 & 0.917 & 0.859 & 0.928 & 0.764 & 0.721 &  9 &  25 & 0.135 \\
chaksu\_expert\_4 & 403 & 0.881 & 0.813 & 0.894 & 0.684 & 0.625 & 12 &  36 & 0.194 \\
chaksu\_expert\_2 & 386 & 0.837 & 0.892 & 0.826 & 0.648 & 0.589 &  7 &  56 & 0.254 \\
chaksu\_expert\_5 & 382 & 0.893 & 0.456 & 0.987 & 0.602 & 0.588 & 37 &   4 & 0.209 \\
\midrule
\multicolumn{10}{l}{\textit{REFUGE readers}} \\
refuge\_expert\_1 & 452 & 0.942 & 0.793 & 0.964 & 0.780 & 0.747 & 12 &  14 & 0.100 \\
refuge\_expert\_2 & 446 & 0.926 & 0.787 & 0.942 & 0.692 & 0.657 & 10 &  23 & 0.122 \\
refuge\_expert\_4 & 471 & 0.828 & 0.797 & 0.833 & 0.557 & 0.497 & 13 &  68 & 0.272 \\
refuge\_expert\_7 & 431 & 0.889 & 0.580 & 0.929 & 0.547 & 0.485 & 21 &  27 & 0.191 \\
refuge\_expert\_5 & 484 & 0.833 & 0.873 & 0.828 & 0.542 & 0.512 &  7 &  74 & 0.258 \\
refuge\_expert\_6 & 447 & 0.810 & 0.875 & 0.801 & 0.536 & 0.496 &  7 &  78 & 0.293 \\
refuge\_expert\_3 & 443 & 0.745 & 0.745 & 0.745 & 0.383 & 0.327 & 12 & 101 & 0.396 \\
\bottomrule
\end{tabular}
\end{table}
\paragraph{Per-expert operating-point heterogeneity.}
Table~\ref{tab:expert_table} shows that expert variation is not merely a
difference in aggregate \(F_1\); it reflects distinct diagnostic styles,
tolerance to false positives versus false negatives, and implicit decision
thresholds.  This observation is consistent with prior glaucoma-reader studies.
Matched-subset \(F_1\) ranges from \(0.38\) to \(0.78\) among REFUGE
readers and from \(0.60\) to \(0.86\) among CHAKSU readers, indicating substantial
reader-quality heterogeneity. More importantly, the sensitivity--specificity
profiles reveal different clinical operating points. For example,
chaksu\_expert\_5 is highly specific (\(0.987\)) but has low sensitivity
(\(0.456\)), suggesting a conservative threshold for calling glaucoma. In
contrast, chaksu\_expert\_2 has much higher sensitivity (\(0.892\)) but lower
specificity (\(0.826\)) and many more false positives, corresponding to a more
referral-tolerant style. Similar patterns appear in REFUGE: experts~5 and~6
preserve high sensitivity but incur many false positives, whereas expert~7 is
more specificity-preserving but misses more glaucomatous cases. Thus, the
available readers differ not only in accuracy but also in the types of diagnostic
errors they tolerate.

This heterogeneity motivates the design of MPD$^2$-Router. A single pooled
``human expert,'' majority vote, or unstructured defer action cannot represent
these reader-specific operating points. The best-oracle row in
Table~\ref{tab:expert_table} is not a deployable method, but it is informative:
reader errors are not perfectly aligned, so the most useful human reviewer is
case-dependent. MPD$^2$-Router directly targets this setting by separating
whether to defer, through \(d_i\), from whom to defer to, through the conditional
masked allocation \(q_i\). The router can therefore learn when human review has
positive marginal value and which available expert is most appropriate for the
case, rather than treating all experts as exchangeable.

\paragraph{Matched comparison with MPD$^2$-Router.}
Figure~\ref{fig:barbell cahrt} compares each individual reader with
MPD$^2$-Router on exactly the same matched cases. MPD$^2$-Router improves
matched-subset \(F_1\) in all \(12/12\) expert pairings, with gains as large as
\(+0.48\) over the weakest reader and a positive gain even over the strongest
reader. The smaller gap near the best CHAKSU reader reflects a natural ceiling
effect, but the dominance remains strict. This shows that MPD$^2$-Router is not
simply compensating for weak annotators; it adds complementary, risk-aware
routing signal even when the available human prior is strong.

\begin{figure}[t]
    \centering
    \includegraphics[width=\linewidth]{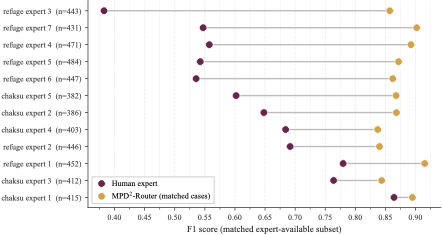}
    \caption{
    Matched-subset \(F_1\) comparison between individual human experts and
    MPD$^2$-Router. Each row evaluates one expert and MPD$^2$-Router on the same
    cases where that expert is available; \(n\) denotes the matched availability
    size, and gray lines show the \(F_1\) gap.
    }
    \label{fig:barbell cahrt}
\end{figure}

\paragraph{Spatial evidence for selective deferral.}
Figure~\ref{fig:spatial_performance_map} resolves the same comparison over a
two-dimensional embedding of the test distribution. The mean available-human
accuracy appears high across much of the manifold, but this aggregate view masks
the per-expert heterogeneity quantified in Table~\ref{tab:expert_table}. In
contrast, the frozen AI classifier exhibits a localized failure region where
accuracy collapses. MPD$^2$-Router largely recovers this region, leaving residual
errors sparse rather than spatially clustered. Its deferral mass is concentrated
on the AI failure region and suppressed where the frozen classifier is already
reliable. Thus, MPD$^2$-Router does not spend expert capacity uniformly or merely
to satisfy a budget; it makes selective, Pareto-favorable human--AI routing
decisions by allocating expert review where the expected marginal clinical value
is highest.

\begin{figure}[!htbp]
    \centering
    \includegraphics[
        width=\textwidth,
        trim=10 0 10 30,
        clip
    ]{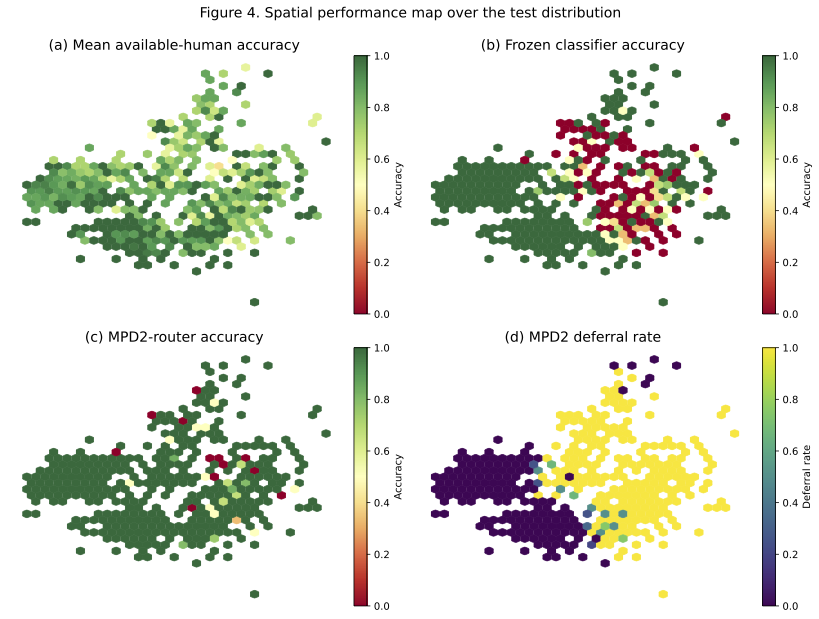}
    \caption{
    Spatial performance map over the test distribution. MPD$^2$-Router
    concentrates deferral on localized AI failure regions while preserving AI
    decisions in regions where the frozen classifier remains reliable.
    }
    \label{fig:spatial_performance_map}
\end{figure}

Together, the matched-subset, operating-point, and spatial analyses support the
central claim of MPD$^2$-Router: glaucoma deferral is not a binary choice between
AI and a homogeneous human fallback. It is an instance-wise, mask-aware allocation
problem under heterogeneous reader behavior, asymmetric diagnostic harm, and
localized model failure. MPD$^2$-Router improves over every individual expert on
that expert's own available cases precisely because it learns this structured
human--AI complementarity.

\section{Dataset details.}
\label{app:data}
\paragraph{Dataset access, licenses, and terms of use.}
We use only previously released ophthalmic datasets and credit the original
creators in all cases. Raw fundus images and raw annotations are not
redistributed with this submission. Instead, we provide preprocessing scripts,
configuration files, split identifiers, and instructions for obtaining each
dataset from its original source, so that users can reproduce the experiments
after agreeing to the corresponding dataset terms. All datasets were used only
for non-commercial academic research. No web-scraped medical images are used.

\begin{table}[t]
\centering
\small
\caption{Public ophthalmic datasets and asset-use information. We do not
redistribute raw medical images or annotations; users must obtain each dataset
from the original source and follow the corresponding license or access terms.}
\label{tab:dataset_assets}
\begin{tabular}{p{0.13\linewidth}p{0.25\linewidth}p{0.27\linewidth}p{0.25\linewidth}}
\toprule
Dataset & Source and credit & Annotations used in this work & Access, license, and redistribution \\
\midrule
REFUGE &
REFUGE Retinal Fundus Glaucoma Challenge \citep{orlando2020refuge};
official challenge page: \url{https://refuge.grand-challenge.org/};
Figshare DOI: \url{https://doi.org/10.6084/m9.figshare.26049574}. &
Color fundus images, clinical glaucoma labels, and optic-disc/optic-cup
annotations. REFUGE provides 1200 images with official train/validation/test
subsets; in our work it is also used to train the frozen AI backbone and to
derive structural features for routing. &
Released on Figshare under
Creative Commons Attribu-
tion 4.0 International (CC
BY 4.0). 
Available from the original challenge/data-hosting platform under its access
terms. We do not redistribute raw REFUGE images or annotations. \\

CHAKSU &
Chákṣu IMAGE dataset \citep{kumar2023chakṣu};
Figshare DOI: \url{https://doi.org/10.6084/m9.figshare.20123135}. &
1345 Indian-ethnicity fundus images acquired using three fundus-camera types,
with optic-disc/optic-cup contours and binary glaucoma decisions from five
expert ophthalmologists. We use the per-expert decisions and annotations for
multi-expert routing supervision. &
Released on Figshare under Creative Commons Attribution 4.0 International (CC BY 4.0). We cite the dataset and do not redistribute raw images in our
supplementary material. \\

ORIGA-light &
ORIGA-light dataset \citep{zhang2010origa}. 
Figshare DOI: \url{https://doi.org/10.6084/m9.figshare.20123135}&

650 Singaporean fundus images with image-level glaucoma labels and
optic-disc/optic-cup annotations. ORIGA does not provide the dense
multi-expert decision structure required for direct expert-allocation supervision, so we use it as an expert-sparse cross-cohort evaluation setting. &
Released on Figshare under Creative Commons Attribution 4.0 International (CC BY 4.0) and available through the original data-owner access/request procedure or the
specific public repository used by the researcher. We do not claim ownership or redistribute raw ORIGA images.  \\
\bottomrule
\end{tabular}
\end{table}

\paragraph{Preprocessing and reproducible splits.}
For each dataset, we preserve the original image identifiers and construct a
single fixed split used throughout all experiments. REFUGE follows the official
challenge split when training and validating the frozen AI model. CHAKSU follows
the released train/test organization where applicable. For ORIGA, which does
not provide dense multi-expert routing labels, we use a fixed stratified split
with the same random seed across all methods. 

\paragraph{Use of existing code and model assets.}
All third-party software libraries and pretrained backbones used for feature
extraction or initialization are cited in the main paper or implementation
appendix, together with their versions and licenses where available. The
proposed MPD$^2$-Router training code, prior construction, mask-aware routing
logic, and evaluation scripts are our own implementation. No external clinical
decision model is redistributed as part of this submission unless explicitly
permitted by its license.

\clearpage

\newpage
\end{document}